\newtheorem{theorem}{Theorem}
\newtheorem{proposition}[theorem]{Proposition}
\newtheorem{corollary}[theorem]{Corollary}
\def\bR{\mathbb R}
\def\bE{\mathbb E}
\def\bN{\mathbb N}
\def\Ccal{\mathcal C}
\def\Dcal{\mathcal D}
\def\Kcal{\mathcal K}
\def\Mcal{\mathcal M}
\def\Ocal{\mathcal O}
\def\Scal{\mathcal S}
\def\Xcal{\mathcal X}
\def\Tscr{\mathscr T}
\def\Rscr{\mathscr R}
\def\Pscr{\mathscr P}
\def\dim{{\rm dim}}
\def\rank{{\rm rank}}
\def\diag{{\rm diag}}
\def\tr{{\rm trace}}
\def\dim{{\rm dim}}
\def\argmin{\mathop{\rm argmin}}
\def\one{\mathbf 1}
\def\zero{\mathbf 0}
\def\hat{\widehat}
\def\blist#1#2#3
\def\elist{\end{list}}
\def\beq{\begin{equation}}
\def\eeq{\end{equation}}
\def\bes{\begin{eqnarray*}}
\def\ees{\end{eqnarray*}}
\begin{document}

\title{Distance Shrinkage and Euclidean Embedding via Regularized Kernel Estimation}

\date{(\today)}

\author{\\
Luwan Zhang$^\ast$, Grace Wahba$^\dag$ and Ming Yuan$^{\ast,\ddag}$\\
\\
Department of Statistics\\
University of Wisconsin-Madison}

\footnotetext[1]{
Research supported in part by NSF Career Award DMS-1321692 and FRG Grant DMS-1265202}
\footnotetext[2]{
Research supported in part by NIH Grant EY09946 and NSF Grant DMS1308877.}
\footnotetext[3]{
Address for correspondence: Department of Statistics, University of Wisconsin-Madison, 1300 University Avenue, Madison, WI 53706.}

\maketitle
\newpage

\begin{abstract}
Although recovering an Euclidean distance matrix from noisy observations is a common problem in practice, how well this could be done remains largely unknown. To fill in this void, we study a simple distance matrix estimate based upon the so-called regularized kernel estimate. We show that such an estimate can be characterized as simply applying a constant amount of shrinkage to all observed pairwise distances. This fact allows us to establish risk bounds for the estimate implying that the true distances can be estimated consistently in an average sense as the number of objects increases. In addition, such a characterization suggests an efficient algorithm to compute the distance matrix estimator, as an alternative to the usual second order cone programming known not to scale well for large problems. Numerical experiments and an application in visualizing the diversity of Vpu protein sequences from a recent HIV-1 study further demonstrate the practical merits of the proposed method.
\end{abstract}
\vskip 10pt
\noindent{\bf Key words:} Embedding, Euclidean distance matrix, kernel, multidimensional scaling, regularization, shrinkage, trace norm.

\section{Introduction}
The problem of recovering an Euclidean distance matrix from noisy or imperfect observations of pairwise dissimilarity scores between a set of objects arises naturally in many different contexts. It allows us to map objects from an arbitrary domain to Euclidean spaces, and therefore makes them amenable for subsequent statistical analyses, and also provides tools for visualization. Consider, for example, evaluating (dis)similarity between molecular sequences. A standard approach is through sequence alignment and measuring the (dis)similarity between a pair of sequences using their corresponding alignment score (see, Durbin et al., 1998). Although encoding invaluable insights into the relationship between sequences, it is well known that these scores do not correspond directly to a distance metric in the respective sequence space and therefore cannot be employed in kernel based learning methods. Similarly, there are also numerous other instances where it is possible to derive similarity or dissimilarity scores for pairs of objects from expert knowledge or other information, which, if successfully converted into positive semi-definite kernels or Euclidean distances, could allow themselves to play an important role in a myriads of statistical and computational analyses (e.g., Sch\"olkopf and Smola, 2002; Sz\'ekely, Rizzo and Bakirov, 2007).

A canonical example where this type of problem occurs is multidimensional scaling which aims to place each object in a low dimensional Euclidean space such that the between-object distances are preserved as well as possible. As such it also forms the basis for several other more recent approaches to nonlinear dimension reduction and manifold learning. See, Sch\"olkopf (1998), Tenanbaum, De Silva and Langford (2000), Lu et al. (2005), Venna and Kaski (2006), Weinberger et al. (2007), Chen and Buja (2009, 2013) among others. Despite the popularity of multidimensional scaling, very little is known about to what extent the distances among the embedded points could faithfully reflect the true pairwise distances when observed with noises; and it is largely used only as an exploratory data analysis tool.

Another example where it is of interest to reconstruct an Euclidean distance matrix is the determination of molecular structures using nuclear magnetic resonance (NMR, for short) spectroscopy, a technique pioneered by Nobel laureate Kurt W\"uthrich (see, e.g., W\"uthrich, 1986). As demonstrated by W\"uthrich, distances between atoms could be inferred from chemical shifts measured by NMR spectroscopy. These distances obviously need to conform to a three dimensional Euclidean space yet experimental data on distances are inevitably noisy and as a result, the observed distances may not translate directly into locations of these atoms in a stable structure. Therefore, this becomes a problem of recovering an Euclidean distance matrix in 3D from noisy observations of pairwise distances. Similar problems also occur in graph realization and Euclidean representation of graphs where the goal is to embed the vertex set of a graph in an Euclidean space in such a fashion that the distance between two embedded vertexes matches their corresponding edge weight (see, e.g., Pouzet, 1979). While an exact embedding of a graph is typically of very high dimension, it is useful in some applications to instead seek approximate yet low dimensional embeddings (see, e.g., Roy, 2010).

More specifically, let $\{O_i: i=1,2,\ldots,n\}$ be a collection of objects from domain $\Ocal$ which could be the coordinates of atoms in the case of molecular structure determination using NMR spectroscopy, or the vertex set of a graph in the case of graph realization. Let $\{x_{ij}: 1\le i<j\le n\}$ be the observed dissimilarity scores between them such that 
$$
x_{ij}= d_{ij}+\varepsilon_{ij},\qquad 1\le i<j\le n,
$$
where $\varepsilon_{ij}$s are the measurement errors and $D=(d_{ij})_{1\le i,j\le n}$ is a so-called Euclidean distance matrix in that there exist points $p_1,\ldots, p_n\in \bR^{k}$ for some $k\in \bN$ such that
\beq
\label{eq:embed}
d_{ij}=\|p_i-p_j\|^2,\qquad 1\le i<j\le n;
\eeq
see, e.g., Darrotto (2013). Here $\|\cdot\|$ stands for the usual Euclidean distance . Our goal is to estimate the Euclidean distance matrix $D$ from the observed matrix $X=(x_{ij})_{1\le i,j\le n}$ where we adopt the convention that $x_{ji}=x_{ij}$ and $x_{ii}=0$. In the light of (\ref{eq:embed}), $D$ can be identified with the points $p_i$s, which suggests an embedding of $O_i$s in $\bR^{k}$. Obviously, if $O_i$s can be embedded in the Euclidean space of a particular dimension, then it is also possible to embed them in a higher dimensional Euclidean space. We refer to the smallest $k$ in which such an embedding is possible as the embedding dimension of $D$, denoted by $\dim(D)$. As is clear from the aforementioned examples, oftentimes, either the true Euclidean distance matrix $D$ itself is of low embedding dimension; or we are interested in an approximation of $D$ that allows for a low dimensional embedding. Such is the case, for example, for molecular structure determination where the the embedding dimension of the true distance matrix $D$ is necessarily three. Similarly, for multidimensional scaling or graph realization, we typically are interested in mapping objects in two or three dimensions.

Recall that
$$
d_{ij}=p_i^\top p_i+p_j^\top p_j-2p_i^\top p_j,
$$
which relates $D$ to the so-called kernel (or Gram) matrix $K=(p_i^\top p_j)_{1\le i,j\le n}$. Furthermore, it is also clear that the embedding dimension $\dim(D)$ equals to $\rank(K)$. Motivated by this correspondence between an Euclidean distance matrix and a kernel matrix, we consider estimating $D$ by $\hat{D}=(\hat{d}_{ij})_{1\le i,j\le n}$ where
\begin{equation}
\label{eq:defD}
\widehat{d}_{ij}=\left\langle \hat{K}, (e_i-e_j)(e_i-e_j)^\top\right\rangle=\hat{k}_{ii}+\hat{k}_{jj}-2\hat{k}_{ij}.
\end{equation}
Here $\langle A, B\rangle=\tr(A^\top B)$, $e_i$ is the $i$th column vector of the identity matrix, and $\hat{K}=(\hat{k}_{ij})_{1\le i,j\le n}$ is the the so-called regularized kernel estimate; see, e.g., Lu et al. (2005) and Weinberger et al. (2007). More specifically,
\begin{equation}
\label{eq:rke}
\widehat{K}=\argmin_{M\succeq 0}\left\{\sum_{1\le i<j\le n}\left(x_{ij}- \left\langle M, (e_i-e_j)(e_i-e_j)^\top\right\rangle\right)^2+\lambda_n \tr(M)\right\},
\end{equation}
where $\lambda_n\ge 0$ is a tuning parameter that balances the tradeoff between goodness-of-fit and the preference towards an estimate with smaller trace norm. Hereafter, we write $M\succeq 0$ to indicate that a matrix $M$ is positive semi-definite. The trace norm penalty used in defining $\hat{K}$ encourages low-rankness of the estimated kernel matrix and hence low embedding dimension of $\hat{D}$. See, e.g., Lu et al. (2005), Yuan et al. (2007), Negahban and Wainwright (2011), Rohde and Tsybakov (2011), and Lu, Monteiro and Yuan (2012) among many others for similar use of this type of penalty. The goal of the current article is to study the operating characteristics and statistical performance of the estimate $\hat{D}$ defined by (\ref{eq:defD}).

A fundamental difficulty in understanding the behavior of the proposed  distance matrix estimate $\hat{D}$ comes from the simple observation that a kernel is not identifiable given pairwise distances alone, even without noise, as the latter is preserved under translation while the former is not. Therefore, it is not clear what exactly $\hat{K}$ is estimating, and subsequently what the relationship between $\hat{D}$ and $D$ is. To address this challenge, we introduce a notion of minimum trace kernel to resolve the ambiguity associated with kernel estimation. Understanding of this concept allows us to more directly and explicitly characterize $\hat{D}$ as first applying a constant amount of shrinkage to all observed distances; and then projecting the shrunken distances to an Euclidean distance matrix. Because the distance between a pair of points shrinks when they are projected onto a linear subspace, this characterization offers a geometrical explanation to the ability of $\hat{D}$ to induce low dimensional embeddings. In addition, this direct characterization of $\hat{D}$ also suggests an efficient way to compute it using a version of Dykstra's alternating projection algorithm thanks to the special geometric structure of $\Dcal_n$, the set of $n\times n$ distance matrices. See, e.g., Glunt et al. (1990). Obviation of semidefinite programming, and more generally second order cone programmingÕs computational expense is the principal advantage of this alternating projection technique. Furthermore, based on this explicit characterization, we establish statistical risk bounds for the discrepancy $\hat{D}-D$ and show that the true distances can be recovered consistently in average if $D$ allows for (approximate) low dimensional embeddings. 

The rest of the paper is organized as follows. In Section \ref{sec:meth}, we discuss in details the shrinkage effect of the estimate $\hat{D}$ by exploiting the duality between a kernel matrix and an Euclidean distance matrix. Taking advantage of our explicit characterization of $\hat{D}$ and the geometry of the convex cone of Euclidean distance matrices, Section \ref{sec:th} establishes risk bounds for $\hat{D}$ and Section \ref{sec:alg} describes how $\hat{D}$ can be computed using an efficient alternating projection algorithm. The merits of $\hat{D}$ is further illustrated via numerical examples, both simulated and real, in Section \ref{sec:num}. All proofs are relegated to Section \ref{sec:proof}.

\section{Distance Shrinkage}
\label{sec:meth}

In this section, we show that there is a one-to-one correspondence between an Euclidean distance matrix and a so-called minimum trace kernel; and exploit this duality explicitly to characterize $\hat{D}$.

\subsection{Minimum Trace Kernels}
\label{sec:mintr}
Despite the popularity of regularized kernel estimate $\hat{K}$, rather little is known about its statistical performance. This is perhaps in a certain sense inevitable because a kernel is not identifiable given pairwise distances alone. To resolve this ambiguity, we introduce the concept of minimum trace kernel, and show that $\hat{K}$ is targeting at the unique minimum trace kernel associated with the true Euclidean distance matrix.

Recall that any $n\times n$ positive semidefinite matrix $K$ can be identified with a set of points $p_1,\ldots, p_n\in \bR^{k}$ for some $k\in \bN$ such that $K=PP^{\top}$ where $P=(p_1,\ldots, p_n)^\top$. At the same time, these points can also be associated with an $n\times n$ Euclidean distance matrix $D=(d_{ij})_{1\le i,j\le n}$ where
$$
d_{ij}=\|p_i-p_j\|^2,\qquad 1\le i<j\le n.
$$
Obviously,
$$
d_{ij}=\langle K, B_{ij}\rangle,
$$
where
$$
B_{ij}=(e_i-e_j)(e_i-e_j)^\top.
$$

It is clear that any positive semi-definite matrix $M$ can be a kernel matrix and therefore translated uniquely into a distance matrix. In other words,
$$
\Tscr(M)=\diag(M)\one^\top +\one\diag(M)^\top-2M=(m_{ii}+m_{jj}-2m_{ij})_{1\le i,j\le n}
$$
is a surjective map from the set $\Scal_n$ of $n\times n$ positive semi-definite matrices to $\Dcal_n$. Hereafter, we write $\one$ as a vector of ones of conformable dimension. The map $\Tscr$, however, is not injective because, geometrically, translation of the embedding points results in different kernel matrix yet the distance matrix remains unchanged. As a result, it may not be meaningful, in general, to consider reconstruction of a kernel matrix from dissimilarity scores alone.

It turns out that one can easily avoid such an ambiguity by requiring the embeddings to be centered in that $P^\top \one =\zero$ where $\zero$ is a vector of zeros of conformable dimension. We note that even with the centering, the embeddings as represented by $P$ for any given Euclidean distance matrix still may not be unique as distances are invariant to rigid motions. However, their corresponding kernel matrix, as the following result shows, is indeed uniquely defined. Moreover the kernel matrix can be characterized as having the smallest trace among all kernels that correspond to the same distance matrix, hence will be referred to as the minimum trace kernel.

\begin{theorem}
\label{th:mintr}
Let $D$ be an $n\times n$ distance matrix. Then the preimage of of $D$ under $\Tscr$
$$
\Mcal(D)=\{M\in \Scal_n: \Tscr(M)=D\}
$$
is convex; and $-JDJ/2$ is the unique solution to following convex program
$$
\argmin_{M\in \Mcal(D)} \tr(M),
$$
where $J=I-(\one\one^\top/n)$. In addition, if $p_1,\ldots, p_n\in \bR^n$ is an embedding of $D$ such that $p_1+\ldots+p_n=\zero$, then $PP^\top=-JDJ/2$, where $P=(p_1,\ldots, p_n)^\top$.
\end{theorem}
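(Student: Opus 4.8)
The plan is to treat the three assertions in turn, with the minimum-trace claim resting on two elementary identities involving the centering matrix $J=I-\one\one^\top/n$. Convexity of $\Mcal(D)$ is immediate: the map $\Tscr$ is linear in $M$, so $\{M:\Tscr(M)=D\}$ is an affine subspace, and $\Mcal(D)=\Scal_n\cap\{M:\Tscr(M)=D\}$ is the intersection of the convex cone $\Scal_n$ with an affine subspace, hence convex.

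The engine for the remaining claims is the observation that $J$ annihilates $\one$, that is $J\one=\zero$ and $\one^\top J=\zero^\top$. Applied to $\Tscr(M)=\diag(M)\one^\top+\one\diag(M)^\top-2M$, this kills the two rank-one pieces and leaves, for every symmetric $M$,
$$
J\,\Tscr(M)\,J=-2\,JMJ .
$$
Consequently, for any $M\in\Mcal(D)$ we have $JMJ=-JDJ/2$; the ``centered part'' of every feasible matrix is thus pinned to the single matrix $-JDJ/2$. The second identity is a trace decomposition coming from $J^2=J$: since $\tr(JMJ)=\tr(JM)=\tr(M)-\tfrac1n\one^\top M\one$, we obtain
$$
\tr(M)=\tr(JMJ)+\tfrac1n\one^\top M\one=\tr(-JDJ/2)+\tfrac1n\one^\top M\one .
$$
The first term is constant over $\Mcal(D)$, while the second is nonnegative because $M\succeq 0$, vanishing precisely when $\one^\top M\one=0$, i.e. (writing $\one^\top M\one=\|M^{1/2}\one\|^2$ for PSD $M$) when $M\one=\zero$. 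When $M\one=\zero$, symmetry gives $\one^\top M=\zero^\top$, whence $JMJ=M$; combined with $JMJ=-JDJ/2$ this forces $M=-JDJ/2$. Thus any minimizer equals $-JDJ/2$, which settles both optimality and uniqueness, provided $-JDJ/2$ is itself feasible.

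Feasibility, together with the final assertion, follows from one computation. Because $D\in\Dcal_n$ it admits an embedding, which we recenter so that $\sum_i p_i=\zero$, i.e. $\one^\top P=\zero^\top$ for $P=(p_1,\ldots,p_n)^\top$. Then $\Tscr(PP^\top)=D$ by the definition of the distance matrix, and since $\one^\top P=\zero^\top$ gives $JP=P$, the first identity yields
$$
-JDJ/2=-J\,\Tscr(PP^\top)\,J/2=JPP^\top J=PP^\top .
$$
This simultaneously proves the last statement of the theorem and exhibits $-JDJ/2=PP^\top\succeq 0$ with $\Tscr(-JDJ/2)=D$, so that $-JDJ/2\in\Mcal(D)$ as required.

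I do not anticipate a genuine technical obstacle; the content is essentially the two centering identities above. The one point demanding care is logical ordering: one must produce a \emph{centered} embedding to certify that $-JDJ/2$ actually lies in $\Mcal(D)$, rather than merely being a lower bound for the trace. In other words, the substantive input is not the algebra but the hypothesis that $D$ is a bona fide Euclidean distance matrix, which is exactly what guarantees the existence of the embedding $P$ used to establish feasibility.
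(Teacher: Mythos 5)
Your proof is correct, and it takes a genuinely different route from the paper's at the two places where the paper works hardest. The paper (i) certifies feasibility of $M_0=-JDJ/2$ by the direct computation $\langle M_0,B_{ij}\rangle=d_{ij}$ together with a citation to Sch\"onberg (1935) and Young--Householder (1938) for $M_0\succeq 0$; (ii) derives the same trace decomposition $\tr(M)=\tr(M_0)+\one^\top M\one/n$ that you use (though by a longer explicit expansion of $\tr(-J\Tscr(M)J/2)$); and (iii) proves uniqueness by expanding $M-M_0$ in the basis $\{B_{ij}:i<j\}\cup\{e_ie_i^\top\}$ of symmetric matrices, killing the diagonal coefficients via $\one\in\ker(M)\cap\ker(M_0)$, and concluding $\|M-M_0\|_{\rm F}^2=\sum_{i<j}\alpha_{ij}\langle J(M-M_0)J,B_{ij}\rangle=0$. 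Your pinning identity $J\Tscr(M)J=-2JMJ$, which forces $JMJ=-JDJ/2$ for \emph{every} $M\in\Mcal(D)$, replaces step (iii) entirely: once $\one^\top M\one=0$ gives $M\one=\zero$ and hence $M=JMJ$, uniqueness is immediate, with no basis expansion. Your feasibility argument is also more self-contained: instead of citing Sch\"onberg's theorem for $-JDJ/2\succeq 0$, you recenter an embedding (which exists by the paper's very definition of a Euclidean distance matrix) and compute $PP^\top=-JDJ/2$ directly; this single computation delivers feasibility, positive semi-definiteness, and the theorem's final assertion in one stroke, whereas the paper deduces the final assertion as a corollary of its uniqueness argument. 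You additionally prove convexity of $\Mcal(D)$ explicitly (linearity of $\Tscr$ intersected with the cone $\Scal_n$), a claim stated in the theorem that the paper's proof passes over in silence. Net effect: your version is shorter and more transparent where it differs, and the two proofs share the same trace identity at their core.
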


\vskip 10pt

In the light of Theorem \ref{th:mintr}, $\Tscr$ is bijective when restricted to the set of minimum trace kernels:
$$
\Kcal=\{M\succeq 0: \tr(M)\le \tr(A),\quad \forall A\in \Mcal(\Tscr(M))\}.
$$
and its inverse is $\Rscr(M)=-JMJ/2$ as a map from distance matrices to kernels with minimum trace. From this viewpoint, the regularized kernel estimate $\hat{K}$ intends to estimate $\Rscr(D)$ instead of the original data-generating kernel. In addition, it is clear that
\begin{proposition}
\label{pr:rke}
For any $\lambda_n>0$, the regularized kernel estimate $\widehat{K}$ as defined in (\ref{eq:rke}) is a minimum trace kernel. In addition, any embedding $\hat{P}$ of $\hat{K}$, that is $\hat{K}=\hat{P}\hat{P}^\top$, is necessarily centered so that $\hat{P}^\top \one=\zero$.
\end{proposition}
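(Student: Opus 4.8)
The plan is to exploit the fact that the data-fidelity term in \eqref{eq:rke} depends on $M$ only through the induced distance matrix $\Tscr(M)$, so that within a single fiber the trace penalty is the only term that can distinguish one kernel from another. First I would note that $\langle M, B_{ij}\rangle = m_{ii}+m_{jj}-2m_{ij}$ is precisely the $(i,j)$ entry of $\Tscr(M)$; consequently the sum of squares in \eqref{eq:rke} is a function of $\Tscr(M)$ alone, and is constant on each fiber $\Mcal(D)=\{M\succeq 0:\Tscr(M)=D\}$. This reduces the problem to comparing traces within a fixed fiber, which is exactly the content of Theorem \ref{th:mintr}.

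To establish the first claim, let $\widehat{K}$ be a minimizer of \eqref{eq:rke} and set $D=\Tscr(\widehat{K})$. By Theorem \ref{th:mintr} the fiber $\Mcal(D)$ is convex and contains the unique minimum-trace element $\Rscr(D)=-JDJ/2$, which satisfies $\tr(\Rscr(D))\le \tr(\widehat{K})$ and $\Tscr(\Rscr(D))=D=\Tscr(\widehat{K})$. Hence replacing $\widehat{K}$ by $\Rscr(D)$ leaves the data-fidelity term unchanged while not increasing the penalty, so the objective value cannot increase; by optimality of $\widehat{K}$ the two objective values must coincide, and since $\lambda_n>0$ this forces $\tr(\Rscr(D))=\tr(\widehat{K})$. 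The uniqueness assertion of Theorem \ref{th:mintr} then gives $\widehat{K}=\Rscr(D)=-JDJ/2$, so $\widehat{K}$ is the minimum-trace kernel in its own fiber, i.e. $\widehat{K}\in\Kcal$.

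For the centering claim I would use the explicit form $\widehat{K}=-JDJ/2$ just derived. Since $\one^\top\one=n$, we have $J\one=\one-\one(\one^\top\one)/n=\zero$, and therefore $\widehat{K}\one=-JDJ\one/2=\zero$; in particular $\one^\top\widehat{K}\one=0$. Writing any embedding as $\widehat{K}=\hat{P}\hat{P}^\top$, one has $\one^\top\widehat{K}\one=\one^\top\hat{P}\hat{P}^\top\one=\|\hat{P}^\top\one\|^2$, so this nonnegative quantity vanishes, which forces $\hat{P}^\top\one=\zero$, as claimed.

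The computations here are routine; the only real subtlety is that both the strict positivity $\lambda_n>0$ and the uniqueness statement in Theorem \ref{th:mintr} are genuinely needed. Without strict positivity, every element of the optimal fiber would remain optimal and the centering conclusion could fail; and without uniqueness one could not pin $\widehat{K}$ down to $-JDJ/2$, which is what delivers both conclusions. A minor point to verify along the way is that $\Mcal(D)$ is nonempty and that $\Rscr(D)$ lies in it, both of which are already guaranteed by Theorem \ref{th:mintr}.
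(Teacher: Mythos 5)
Your proof is correct, and it takes essentially the approach the paper itself relies on: the paper states Proposition \ref{pr:rke} without proof (``it is clear that''), and the intended justification is precisely your reduction, namely that the data-fidelity term in (\ref{eq:rke}) depends on $M$ only through $\Tscr(M)$, so for $\lambda_n>0$ the trace penalty together with the uniqueness assertion of Theorem \ref{th:mintr} forces $\widehat{K}=-J\Tscr(\widehat{K})J/2$. The centering claim then follows exactly as you argue, from $J\one=\zero$ and $\one^\top\hat{P}\hat{P}^\top\one=\|\hat{P}^\top\one\|^2=0$.
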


The relationships among the data-generating kernel $K$, $D$, $\Rscr(D)$, regularized kernel estimate $\hat{K}$ as defined by (\ref{eq:rke}), and the distance matrix estimate $\hat{D}$ as defined by (\ref{eq:defD}) can be described by Figure \ref{fig:relation}.

\begin{figure}[htbp]
\begin{center}
\begin{tikzpicture}
  \matrix (m) [matrix of math nodes, row sep=3em, column sep=3em]
    { K & D  & \Rscr(D)   \\
          & \hat{D} & \hat{K} \\ };
      \path[-stealth]
     (m-1-1) edge[right] node[above] {$\Tscr$} (m-1-2) 
     (m-1-2) edge[right]   (m-1-3)
     (m-1-3) edge[left]   (m-1-2)
     (m-2-3) edge[dashed,->]  (m-1-3)
     (m-2-2) edge[right] (m-2-3)
                  edge[dashed,->]  (m-1-2)
      (m-2-3) edge[left] (m-2-2);              
\end{tikzpicture}
\end{center}
\caption{Relationships among $K$, $D$, $\Rscr(D)$, $\hat{K}$ and $\hat{D}$: the true distance matrix $D$ is determined by the data-generating kernel $K$; there is a one-to-one correspondence between $D$ and the minimum trace kernel $\Rscr(D)$. Similarly, there is a one-to-one correspondence between $\hat{D}$ and $\hat{K}$ which are estimate of $D$ and $\Rscr(D)$ respectively.}
\label{fig:relation}
\end{figure}
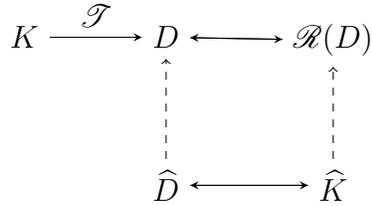

\subsection{Distance Shrinkage}
\label{sec:dist}

We now study the properties of the proposed distance matrix estimate given by (\ref{eq:defD}). Following Theorem \ref{th:mintr}, $\widehat{D}$ can be equivalently expressed as
\begin{equation}
\label{eq:rkels}
\widehat{D}=\argmin_{M\in \Dcal_n}\left\{{1\over 2}\|X-M\|^2_{\rm F}+\lambda_n \tr\left(-{1\over 2}JMJ\right)\right\},
\end{equation}
where $\|\cdot\|_{\rm F}$ stands for the usual matrix Frobenius norm. It turns out that $\hat{D}$ actually allows for a more explicit expression.

To this end, observe that the set $\Dcal_n$ of $n\times n$ Euclidean distance matrices is a closed convex cone (Sch\"onberg, 1935; Young and Householder, 1938). Let $\Pscr_{\Dcal_n}$ denote the projection to $\Dcal_n$ in that
$$
\Pscr_{\Dcal_n}(A)=\argmin_{M\in \Dcal_n}\|A-M\|_{\rm F}^2.
$$
for $A\in \bR^{n\times n}$. Then
\begin{theorem}
\label{th:equiv}
Let $\widehat{D}$ be defined by (\ref{eq:defD}). Then
$$
\widehat{D}=\Pscr_{\Dcal_n}\left(X-{\lambda_n\over 2n} D_0\right)
$$
where $D_0$ is an Euclidean distance matrix whose diagonal elements are zero and off-diagonal entries are ones.
\end{theorem}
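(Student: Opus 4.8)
The plan is to start from the equivalent formulation (\ref{eq:rkels}), which Theorem \ref{th:mintr} already guarantees is the same as the definition (\ref{eq:defD}), and to show that the penalty term, although it appears quadratic in $M$, is actually \emph{linear} once we restrict attention to $M\in\Dcal_n$. This linearity is what allows me to absorb the penalty into the fidelity term by completing the square, thereby converting the penalized least-squares problem into an unpenalized Euclidean projection onto $\Dcal_n$.

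First I would simplify the penalty. Since $J=I-\one\one^\top/n$ is symmetric and idempotent, cyclic invariance of the trace gives $\tr(JMJ)=\tr(J^2M)=\tr(JM)=\tr(M)-\frac1n\one^\top M\one$. The key point is that every $M\in\Dcal_n$ has zero diagonal, so $\tr(M)=0$ and hence $\tr\left(-\frac12 JMJ\right)=\frac{1}{2n}\one^\top M\one$. Using $m_{ii}=0$ once more, $\one^\top M\one=\sum_{i,j}m_{ij}=\sum_{i\ne j}m_{ij}=\langle D_0,M\rangle$, so the penalty reduces to the linear functional $\frac{\lambda_n}{2n}\langle D_0,M\rangle$.

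Next I would complete the square. Writing the objective in (\ref{eq:rkels}) as $\frac12\|X\|_{\rm F}^2-\langle X,M\rangle+\frac12\|M\|_{\rm F}^2+\frac{\lambda_n}{2n}\langle D_0,M\rangle$ and setting $Y=X-\frac{\lambda_n}{2n}D_0$, a direct expansion of $\frac12\|Y-M\|_{\rm F}^2$ shows that the two expressions differ only by the $M$-free constant $\frac12\|X\|_{\rm F}^2-\frac12\|Y\|_{\rm F}^2$. Minimizing the objective over $\Dcal_n$ is therefore identical to minimizing $\frac12\|Y-M\|_{\rm F}^2$, whose minimizer is by definition $\Pscr_{\Dcal_n}(Y)$, yielding the claimed identity $\widehat{D}=\Pscr_{\Dcal_n}\left(X-\frac{\lambda_n}{2n}D_0\right)$.

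The one place requiring genuine care, and the only step where the special structure of distance matrices is used, is the penalty simplification: it is precisely the vanishing of the diagonal that collapses the seemingly quadratic $\tr(JMJ)$ to the linear $\langle D_0,M\rangle$, and this is what makes the completing-the-square argument work at all. For completeness I would also record that $D_0$ really belongs to $\Dcal_n$ so that the subtraction is legitimate; this follows by realizing the regular simplex, since the points $e_i/\sqrt2\in\bR^n$ satisfy $\|e_i/\sqrt2-e_j/\sqrt2\|^2=1$ for every $i\ne j$.
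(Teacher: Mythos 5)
Your proof is correct and is essentially the paper's own argument run in reverse: the paper expands $\bigl\|\bigl(X-\tfrac{\lambda_n}{2n}D_0\bigr)-M\bigr\|_{\rm F}^2$ and identifies the cross term with the trace penalty via $D_0=(n-1)I-nJ$, hollowness of $M$, and idempotence of $J$, while you use the same three facts to collapse the penalty to the linear functional $\tfrac{\lambda_n}{2n}\langle D_0,M\rangle$ and then complete the square. The only genuine addition is your verification that $D_0\in\Dcal_n$ via the simplex embedding $e_i/\sqrt2$, a point the paper asserts without proof.
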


\vskip 10pt

Theorem \ref{th:equiv} characterizes $\hat{D}$ as the projection of $X-(\lambda_n/2n) D_0$ to an Euclidean distance matrix. Therefore, it can be computed as soon as we can evaluate the projection onto the closed convex set $\Dcal_n$. As shown in Section \ref{sec:alg}, this could be done efficiently using an alternating projection algorithm thanks to the geometric structure of $\Dcal_n$. In addition, subtraction of $(\lambda_n/2n) D_0$ from $X$ amounts to applying a constant shrinkage to all observed pairwise distances. Geometrically, distance shrinkage can be the result of projecting points in an Euclidean space onto a lower dimensional linear subspace, and therefore encourages low dimensional embeddings. We now look at the specific example when $n=3$ to further illustrate such an effect.

In the special case of $n=3$ points, the projection to Euclidean distance matrices can be computed analytically. Let
$$
X=\left[\begin{array}{ccc}0& x_{12}& x_{13}\\ x_{12}& 0& x_{23}\\ x_{13}& x_{23}& 0\end{array}\right]
$$
be the observed distance matrix. We now determine the embedding dimension of $\Pscr_{\Dcal_3}(X-\eta D_0)$.

Let
$$
Q={1\over 3+\sqrt{3}}\left[\begin{array}{ccc}2+\sqrt{3}& -1& -(1+\sqrt{3})\\ -1& 2+\sqrt{3}& -(1+\sqrt{3})\\ -(1+\sqrt{3})& -(1+\sqrt{3})& -(1+\sqrt{3})\end{array}\right]
$$
be a $3\times 3$ Householder matrix. Then, for a $3\times 3$ symmetric hollow matrix $X$,
$$
QXQ=\left[\begin{array}{ccc}-{1\over 3}x_{12}-{1+\sqrt{3}\over 3}x_{13}+{1+\sqrt{3}\over 6+3\sqrt{3}}x_{23}& {2\over 3}x_{12}-{1\over 3}x_{13}-{1\over 3}x_{23}& \ast\\ {2\over 3}x_{12}-{1\over 3}x_{13}-{1\over 3}x_{23}& -{1\over 3}x_{12}+{1+\sqrt{3}\over 6+3\sqrt{3}}x_{13}-{1+\sqrt{3}\over 3}x_{23}& \ast\\ \ast& \ast& \ast\end{array}\right],
$$
where we only give the $2\times 2$ leading principle matrix of $QXQ$ and leave the other entries unspecified. As shown by Hayden and Wells (1988), the minimal embedding dimension of $\Pscr_{\Dcal_3}(X)$ can be determined by the eigenvalues of the principle matrix.

More specifically, denote by
$$
\tilde{D}(X)=\left[\begin{array}{cc}{1\over 3}x_{12}+{1+\sqrt{3}\over 3}x_{13}-{1+\sqrt{3}\over 6+3\sqrt{3}}x_{23}& -{2\over 3}x_{12}+{1\over 3}x_{13}+{1\over 3}x_{23}\\ -{2\over 3}x_{12}+{1\over 3}x_{13}+{1\over 3}x_{23}& {1\over 3}x_{12}-{1+\sqrt{3}\over 6+3\sqrt{3}}x_{13}+{1+\sqrt{3}\over 3}x_{23}\end{array}\right],
$$
and
$$
\tilde{D}(X)=U\left[\begin{array}{cc}\alpha_1& 0\\ 0& \alpha_2\end{array}\right]U^\top
$$
its eigenvalue decomposition. Write 
\begin{equation}
\label{eq:defDelx}
\Delta_x:=\sqrt{2[(x_{12} - x_{13})^2+(x_{12} - x_{23})^2+(x_{13} - x_{23})^2]}.
\end{equation}
Then, it can be calculated that
\begin{equation}
\label{eq:eigen}
\alpha_1 = {(x_{12}+x_{13} + x_{23})+\Delta_x \over 3}, \qquad {\rm and}\qquad \alpha_2 = {(x_{12}+x_{13} + x_{23})-\Delta_x \over 3}.
\end{equation}
In the light of Theorem 6.1 of Glunt et al. (1990), we have

\begin{proposition}
\label{pr:proj3}
$$
\dim(\Pscr_{\Dcal_3}(X))=\left\{\begin{array}{ll}2& {\rm if\ }x_{12}+x_{13} + x_{23}>\Delta_x\\ 1& {\rm if\ }-{1\over 2}\Delta_x<x_{12}+x_{13} + x_{23}\le \Delta_x\\ 0 & {\rm otherwise}\end{array}\right.,
$$
where $\Delta_x$ is given by (\ref{eq:defDelx}), and $\dim(\Pscr_{\Dcal_3}(X))=0$ means $\Pscr_{\Dcal_3}(X)=\zero$.
\end{proposition}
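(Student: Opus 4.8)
The plan is to recognize $\Pscr_{\Dcal_3}$ as a metric projection onto a convex cone that has a completely explicit closed form, and then simply read off when the projected matrix has embedding dimension $2$, $1$, or $0$. First I would identify a $3\times 3$ symmetric hollow matrix with the triple of its off-diagonal entries $(x_{12},x_{13},x_{23})\in\bR^3$. Since $\|X-M\|_{\rm F}^2=2[(x_{12}-m_{12})^2+(x_{13}-m_{13})^2+(x_{23}-m_{23})^2]$ for hollow symmetric $X,M$, computing $\Pscr_{\Dcal_3}(X)$ is exactly the Euclidean projection of this triple onto the image of $\Dcal_3$ in $\bR^3$. The role of the Householder reduction recorded above is that $\Dcal_3$ is cut out by the single Cayley--Menger inequality $2(x_{12}x_{13}+x_{12}x_{23}+x_{13}x_{23})-(x_{12}^2+x_{13}^2+x_{23}^2)\ge 0$ (together with the correct nappe), and the quadratic form on the left has matrix $\one\one^\top-2I$, whose eigenvalues are $1$ along $\one$ and $-2$ on $\one^\perp$.

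Consequently, after an orthonormal change of coordinates sending $\one/\sqrt3$ to one axis, $\Dcal_3$ becomes the circular (second-order) cone $\{(t,\mathbf r)\in\bR\times\bR^2:\ t\ge \sqrt2\,|\mathbf r|\}$, where $t$ is the coordinate along $\one$ and $|\mathbf r|^2=(x_{12}^2+x_{13}^2+x_{23}^2)-t^2$; since all of its extreme rays correspond to collinear configurations they lie in the closed positive octant, so this single nappe is indeed all of $\Dcal_3$. A direct computation then gives $t=(x_{12}+x_{13}+x_{23})/\sqrt3$ and $|\mathbf r|=\Delta_x/\sqrt6$, with $\Delta_x$ as in (\ref{eq:defDelx}). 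For a second-order cone the projection is known in closed form and splits into exactly three regimes: the point already lies in the cone (projection equals the point, generic dimension $2$); the point lies in the polar cone (projection equals the apex $\zero$, dimension $0$); or it lies in the remaining lateral region and is projected onto the boundary surface (dimension $1$). Since the polar cone of $\{t\ge\sqrt2|\mathbf r|\}$ is $\{t\le -|\mathbf r|/\sqrt2\}$, the three regimes are $t>\sqrt2|\mathbf r|$, $t\le -|\mathbf r|/\sqrt2$, and everything between.

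Substituting $t=(x_{12}+x_{13}+x_{23})/\sqrt3$ and $|\mathbf r|=\Delta_x/\sqrt6$ turns these into $x_{12}+x_{13}+x_{23}>\Delta_x$, then $x_{12}+x_{13}+x_{23}\le -\tfrac12\Delta_x$, and finally $-\tfrac12\Delta_x< x_{12}+x_{13}+x_{23}\le\Delta_x$, which is exactly the stated trichotomy. Equivalently, in terms of the eigenvalues (\ref{eq:eigen}), dimension $2$ corresponds to $\alpha_2>0$ while the dividing line for dimension $0$ is $3\alpha_1+\alpha_2\le 0$; reading the embedding dimension off the eigenvalues of $\tilde D(X)$ in this fashion is precisely the content of Theorem 6.1 of Glunt et al. (1990) invoked above, so one may either cite it directly or carry out the self-contained cone computation just sketched.

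The one genuinely non-obvious step, and the place I expect to be most delicate, is the dimension-$0$ case. A naive reading would suggest the embedding dimension is just the number of strictly positive eigenvalues among $\alpha_1,\alpha_2$, predicting the apex is reached only when $\alpha_1\le 0$, i.e. $x_{12}+x_{13}+x_{23}\le -\Delta_x$. This is wrong: because $\Dcal_3$ is a pointed cone, the projection already collapses to $\zero$ throughout the entire polar cone, which is strictly larger and yields the threshold $-\tfrac12\Delta_x$ rather than $-\Delta_x$. Pinning down this polar-cone region is exactly what the closed-form second-order-cone projection (equivalently Glunt et al.'s Theorem 6.1) supplies; the remaining work, namely verifying $|\mathbf r|=\Delta_x/\sqrt6$ and translating each regime into the stated inequality, is routine algebra.
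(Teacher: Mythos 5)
Your argument is correct, and it is genuinely more self-contained than what the paper does. The paper offers no real proof of this proposition: it computes the eigenvalues $\alpha_1,\alpha_2$ in (\ref{eq:eigen}) via the Householder reduction and then simply cites Theorem 6.1 of Glunt et al. (1990), which already contains the three-regime description of the nearest Euclidean distance matrix for $n=3$. You instead rederive that description from scratch: identifying hollow symmetric $3\times 3$ matrices with $(x_{12},x_{13},x_{23})\in\bR^3$, showing that $\Dcal_3$ is the circular cone $t\ge\sqrt{2}\,\rho$, where $t=(x_{12}+x_{13}+x_{23})/\sqrt{3}$ is the component along $\one$ and $\rho=\Delta_x/\sqrt{6}$ is the length of the orthogonal component (both computations check out), and then invoking the standard closed-form projection onto a second-order cone, whose polar cone $\{t\le -\rho/\sqrt{2}\}$ produces the dimension-zero threshold $-\Delta_x/2$. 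In effect you prove the special case of Glunt et al.'s theorem that the paper borrows; what your route buys is transparency about exactly the point you flag --- the projection collapses to the apex on the entire polar cone, which is strictly wider than the reflected cone $\{t\le-\sqrt{2}\rho\}$, whence $-\Delta_x/2$ rather than $-\Delta_x$ --- while the paper's citation buys brevity and a statement already phrased in the eigenvalue coordinates used in the surrounding text. Two minor tightenings if you write this up: first, your justification that the single nappe equals $\Dcal_3$ (via extreme rays) only gives one inclusion; the clean argument is Schoenberg's criterion, by which $X\in\Dcal_3$ iff $\tilde{D}(X)\succeq 0$, i.e.\ iff $\alpha_2\ge 0$, i.e.\ iff $x_{12}+x_{13}+x_{23}\ge\Delta_x$ by (\ref{eq:eigen}), which is precisely $t\ge\sqrt{2}\rho$. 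Second, note explicitly that points on the cone's boundary with $\Delta_x>0$ project to themselves and have embedding dimension one, so they correctly land in the middle branch of your final trichotomy, consistent with the strict inequality in the first branch.
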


\vskip 10pt

To appreciate the effect of distance shrinkage, consider the case when $\Pscr_{\Dcal_3}(X)$ has a minimum embedding dimension of two. By Proposition \ref{pr:proj3}, this is equivalent to assuming $\alpha_2>0$. Observe that
$$
\tilde{D}(X-\eta D_0)=\tilde{D}(X)-\eta I_2.
$$
The eigenvalues of $\tilde{D}(X-\eta D_0)$ are therefore $\alpha_1-\eta$ and $\alpha_2-\eta$ where $\alpha_1\ge \alpha_2$ are the eigenvalues of $\tilde{D}(X)$ as given by (\ref{eq:eigen}). This indicates that, by applying sufficient amount of distance shrinkage, we can reduce the minimum embedding dimension as illustrated in Figure \ref{fig:shrink}.

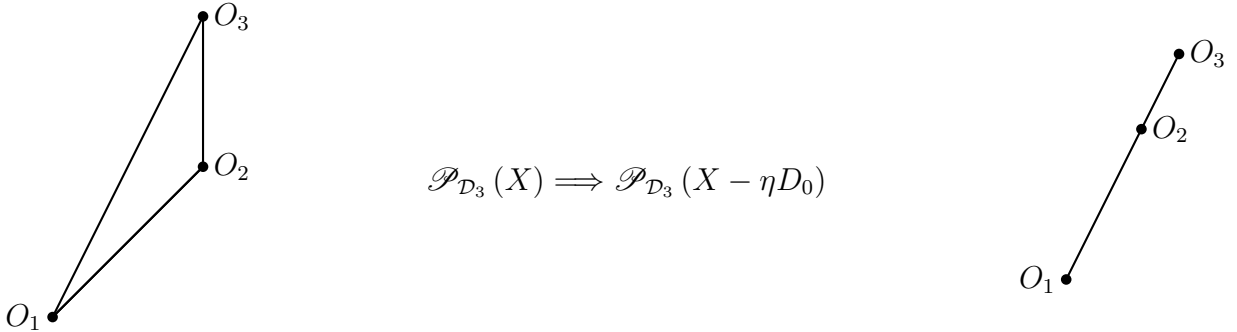
\begin{figure}[htbp]
\begin{minipage}{0.2\textwidth}
\begin{tikzpicture}
\draw[thick,-] (0,0) -- (2,4);
\draw[thick,-] (0,0) -- (2,2);
\draw[thick,-] (2,4) -- (2,2);
\draw[thick,-] (0,0) -- (2,2); 
\fill (0,0) circle (2pt) node[left] {$O_1$};
\fill (2,2) circle (2pt) node[right] {$O_2$};
\fill (2,4) circle (2pt) node[right] {$O_3$};
\end{tikzpicture}
\end{minipage}
\begin{minipage}{0.6\textwidth}
$$
\Pscr_{\Dcal_3}\left(X\right)\Longrightarrow\Pscr_{\Dcal_3}\left(X-\eta D_0\right)
$$
\end{minipage}
\begin{minipage}{0.17\textwidth}
\begin{tikzpicture}
\draw[thick,-] (0,0) -- (1,2);
\draw[thick,-] (1,2) -- (1.5,3);
\fill (0,0) circle (2pt) node[left] {$O_1$};
\fill (1,2) circle (2pt) node[right] {$O_2$};
\fill (1.5,3) circle (2pt) node[right] {$O_3$};
\end{tikzpicture}
\end{minipage}
\caption{Effect of distance shrinkage when $n=3$.}
\label{fig:shrink}
\end{figure}

More specifically,
\begin{itemize}
\item If
$$
{1\over 3}(x_{12}+x_{13}+x_{23})-{\Delta_x\over 3}\le \eta<{1\over 3}(x_{12}+x_{13}+x_{23})+{2\Delta_x\over 3},
$$
then the minimum embedding dimension of $\Pscr_{\Dcal_3}(X-\eta D_0)$ is one.
\item If
$$
\eta\ge {1\over 3}(x_{12}+x_{13}+x_{23})+{2\Delta_x\over 3},
$$
then the minimum embedding dimension of $\Pscr_{\Dcal_3}(X-\eta D_0)$ is zero;
\end{itemize}

\section{Estimation Risk}
\label{sec:th}
The previous section provides an explicit characterization of the proposed distance matrix estimate $\hat{D}$ as a distance shrinkage estimator. We now take advantage this characterization to establish statistical risk bounds for $\widehat{D}$.

\subsection{Estimation Error for Distance Matrix}
A natural measure of the quality of a distance matrix estimate $\tilde{D}$ is the averaged squared error of all pairwise distances:
$$
L(\tilde{D},D):={2\over n(n-1)}\sum_{1\le i<j\le n} \left(\tilde{d}_{ij}-d_{ij}\right)^2.
$$
It is clear that when both $\tilde{D}$ and $D$ are $n\times n$ Euclidean distance matrices,
$$
L(\tilde{D},D)={1\over n(n-1)}\|\tilde{D}-D\|_{\rm F}^2.
$$
For convenience, we shall now consider bounding $\|\hat{D}-D\|_{\rm F}^2$. Taking advantage of the characterization of $\hat{D}$ as a projection onto the set of $n\times n$ Euclidean distance matrices, we can derive the following oracle inequality.

\begin{theorem}
\label{th:oracle}
Let $\widehat{D}$ be defined by (\ref{eq:defD}). Then for any $\lambda_n$ such that $\lambda_n\ge 2\|X-D\|$,
$$
\|\widehat{D}-D\|_{\rm F}^2\le \inf_{M\in \Dcal_n}\left\{\|M-D\|_{\rm F}^2+{9\over 4}\lambda_n^2(\dim(M)+1)\right\},
$$
where $\|\cdot\|$ stands for the matrix spectral norm.
\end{theorem}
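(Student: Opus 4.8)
The plan is to exploit the projection characterization from Theorem~\ref{th:equiv}, namely $\widehat{D}=\Pscr_{\Dcal_n}(X-\tfrac{\lambda_n}{2n}D_0)$, and run a ``basic inequality'' argument adapted to the convex cone $\Dcal_n$. Writing $Y=X-\tfrac{\lambda_n}{2n}D_0$, the variational inequality defining the projection onto the closed convex set $\Dcal_n$ gives $\langle Y-\widehat D,\ M-\widehat D\rangle\le 0$ for every $M\in\Dcal_n$. Expanding $\|\widehat D-D\|_{\rm F}^2$ around an arbitrary oracle $M$ and inserting this inequality, I would first establish the deterministic bound
$$
\|\widehat D-D\|_{\rm F}^2+\|\widehat D-M\|_{\rm F}^2\le \|M-D\|_{\rm F}^2+\langle W,\ \widehat D-M\rangle,\qquad W:=2(X-D)-\tfrac{\lambda_n}{n}D_0 ,
$$
valid for all $M\in\Dcal_n$. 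The extra nonnegative term $\|\widehat D-M\|_{\rm F}^2$ on the left, a manifestation of the strong convexity of the squared Frobenius loss, is what will eventually be traded against the cross term to produce a clean $\lambda_n^2$ rate.

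The second step is to split $\langle W,\widehat D-M\rangle$ into a noise part and a shrinkage part. For the noise part $2\langle X-D,\widehat D-M\rangle$, trace duality gives $2\langle X-D,\widehat D-M\rangle\le 2\|X-D\|\,\|\widehat D-M\|_*\le \lambda_n\|\widehat D-M\|_*$, and the last step is exactly where the hypothesis $\lambda_n\ge 2\|X-D\|$ enters. For the shrinkage part I would pass to the associated minimum trace kernels: since $\langle A,D_0\rangle=2n\,\tr(\Rscr(A))$ for any distance matrix $A$ (using $\tr(\Rscr(A))=\tfrac1{2n}\one^\top A\one$ from Theorem~\ref{th:mintr}), the term $-\tfrac{\lambda_n}{n}\langle D_0,\widehat D-M\rangle$ equals $2\lambda_n[\tr(K_M)-\tr(\widehat K)]$, with $K_M=\Rscr(M)\succeq 0$ and $\widehat K=\Rscr(\widehat D)\succeq 0$. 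The crucial point is to retain the genuinely helpful nonpositive term $-2\lambda_n\tr(\widehat K)$ rather than discarding it.

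The heart of the matter, and the step I expect to be the main obstacle, is to bound the surviving cross term by a multiple of $\sqrt{\dim(M)+1}\,\|\widehat D-M\|_{\rm F}$ rather than of $\sqrt{\rank(\widehat D-M)}\,\|\widehat D-M\|_{\rm F}$; the naive nuclear-to-Frobenius bound is useless here because $\widehat D$ (equivalently $\widehat K$) may have essentially full rank. To repair this I would decompose $\widehat K-K_M$ relative to the range of $K_M$, a subspace of dimension $\dim(M)$ lying inside $\one^\perp$. Because $\widehat K\succeq 0$, decomposability of the trace on the positive semidefinite cone lets the off-support component of $\widehat K$ be dominated by the retained penalty $-2\lambda_n\tr(\widehat K)$, so that only an on-support component of effective rank $\dim(M)+1$ feeds the nuclear-norm estimate, the extra $+1$ being forced by the centering direction $\one$ (equivalently the rank-$\le 2$ additive part $\diag(\cdot)\one^\top+\one\diag(\cdot)^\top$ of $\Tscr$). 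The delicate bookkeeping is that the noise term lives most naturally in the distance domain while the trace penalty is most transparent in the kernel domain; reconciling the two losslessly---so that no condition number of $\Tscr$ (and no spurious factor such as $\|D_0\|=n-1$) survives into the final constant---is precisely what simultaneously pins down the effective rank $\dim(M)+1$ and the numerical constant.

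Finally I would collect the estimates into an inequality of the shape
$$
\|\widehat D-D\|_{\rm F}^2\le \|M-D\|_{\rm F}^2-\|\widehat D-M\|_{\rm F}^2+3\lambda_n\sqrt{\dim(M)+1}\,\|\widehat D-M\|_{\rm F},
$$
and complete the square via $at-t^2\le a^2/4$ with $t=\|\widehat D-M\|_{\rm F}$ and $a=3\lambda_n\sqrt{\dim(M)+1}$, which converts the cross term into the excess $\tfrac{9}{4}\lambda_n^2(\dim(M)+1)$ while absorbing $-\|\widehat D-M\|_{\rm F}^2$. Taking the infimum over $M\in\Dcal_n$ then yields the claimed oracle inequality.
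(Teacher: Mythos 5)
Your scaffolding agrees with the paper's proof at both ends: the basic inequality you derive from the variational (Kolmogorov) characterization of $\Pscr_{\Dcal_n}$ applied at $Y=X-\tfrac{\lambda_n}{2n}D_0$ is exactly the paper's starting point, and your concluding step $at-t^2\le a^2/4$ is the paper's, essentially verbatim. The problem is the middle. The inequality with coefficient $3\lambda_n\sqrt{\dim(M)+1}$ is precisely what a proof must establish, and your proposal does not establish it: it names a strategy (decompose $\hat K-K_M$ relative to ${\rm range}(K_M)$, let the PSD off-support block be absorbed by the retained $-2\lambda_n\tr(\hat K)$) and then asserts the bookkeeping works out ``losslessly.'' Carrying it out shows this is not automatic. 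Write $B=\hat D-M$, $r=\dim(M)$, $K_M=\Rscr(M)$, $\hat K=\Rscr(\hat D)$, and let $P$ be the orthogonal projection onto ${\rm range}(K_M)$. Your noise bound $\lambda_n\|B\|_\ast$ must be moved to the kernel side via $\|B\|_\ast\le\|JBJ\|_\ast+\sqrt2\,\|B\|_{\rm F}$ (the remainder $B-JBJ$ has rank at most two; this is your ``$+1$ from the centering direction'') together with $\|JBJ\|_\ast=2\|\hat K-K_M\|_\ast$. Splitting $\hat K-K_M=\Delta'+\Delta''$ with $\Delta''=(I-P)\hat K(I-P)\succeq0$, the contribution $2\lambda_n\tr(\Delta'')$ is indeed cancelled exactly by the retained penalty $-2\lambda_n\tr(\hat K-K_M)$, so that part of your plan is sound. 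But what survives is $2\lambda_n(\|\Delta'\|_\ast-\tr(\Delta'))$, and $\Delta'$ has rank up to $2r$, not $r$: the rank-based bound gives a coefficient $2\sqrt{2r}+\sqrt2$, which is strictly larger than $3\sqrt{r+1}$ for every $2\le r\le 48$, so the claimed $\tfrac94\lambda_n^2(\dim(M)+1)$ does not follow. To recover the stated constant along your route you need one further observation: $\Delta'$ vanishes as a quadratic form on the $(n-r)$-dimensional subspace ${\rm range}(I-P)$, hence by Courant--Fischer it has at most $r$ negative eigenvalues; since $\|\Delta'\|_\ast-\tr(\Delta')=2\sum_{\mu_i<0}|\mu_i|$ involves only those, one gets $\|\Delta'\|_\ast-\tr(\Delta')\le 2\sqrt r\,\|\Delta'\|_{\rm F}\le\sqrt r\,\|B\|_{\rm F}$, hence the coefficient $2\sqrt r+\sqrt 2\le 3\sqrt{r+1}$, and the theorem. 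That missing piece is the heart of the proof, so as written your proposal has a genuine gap.

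For comparison, the paper runs the same decomposability mechanism but entirely in the distance domain, which dissolves the domain-reconciliation issue you flag. It takes $V$ to be the eigenvectors of $JMJ$, $V_\perp$ an orthonormal basis of $({\rm span}\{\one\}\oplus{\rm span}(V))^\perp$, sets $\Pscr_1 B=V_\perp V_\perp^\top BV_\perp V_\perp^\top$ and $\Pscr_0 B=B-\Pscr_1 B$, and splits the noise term \emph{before} invoking trace duality, using $\Pscr_1 M=0$: $\langle X-D,\hat D-M\rangle=\langle\Pscr_0(X-D),\Pscr_0(\hat D-M)\rangle+\langle\Pscr_1(X-D),\Pscr_1\hat D\rangle$. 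Sch\"onberg's theorem gives $J\hat DJ\preceq0$, hence $\Pscr_1\hat D\preceq0$ and $\|\Pscr_1\hat D\|_\ast=-\tr(\Pscr_1\hat D)$, which is exactly the quantity supplied by the shrinkage term $-\tfrac{\lambda_n}{2n}\langle D_0,\hat D-M\rangle$; under $\lambda_n\ge2\|X-D\|$ the $\Pscr_1$ contributions cancel and only $\Pscr_0(\hat D-M)$ survives. Note, finally, that the paper's own rank accounting at this last step has the very same factor-of-two subtlety as the naive version of your argument ($\Pscr_0(\hat D-M)$ has rank up to $2(\dim(M)+1)$, while the proof uses $\dim(M)+1$), so completing your kernel-side route with the inertia refinement above would not merely fill your gap---it would give a cleaner justification of the constant $\tfrac94$ than the paper's own proof does.
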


\vskip 10pt

Theorem \ref{th:oracle} gives a deterministic upper bound for the error of $\hat{D}$, $\|\hat{D}-D\|_{\rm F}^2$ in comparison with that of an arbitrary approximation to $D$. More specifically, let $\tilde{D}$ be the closest Euclidean distance matrix with embedding dimension $r$ to $D$, in terms of Frobenius norm. Then Theorem \ref{th:oracle} implies that with sufficiently large tuning parameter $\lambda_n$,
$$
L(\hat{D},D)\le L(\tilde{D}, D)+{Cr\lambda_n^2\over n^2},
$$
for some constant $C>0$. In particular, if $D$ itself is embedding dimension $r$, then
$$
L(\hat{D},D)\le {Cr\lambda_n^2\over n^2}.
$$

More explicit bounds for the estimation error can be derived from this general result. Consider, for example, the case when the observed pairwise distances are the true distances subject to additive noise:
\begin{equation}
\label{eq:add}
x_{ij}=d_{ij}+\varepsilon_{ij},\qquad 1\le i<j\le n,
\end{equation}
where the measurement errors $\varepsilon_{ij}$s are independent with mean $\bE(\varepsilon_{ij})=0$ and variance ${\rm var}(\varepsilon_{ij})=\sigma^2$. Assume that the distributions of measurement errors have light tails such that
\begin{equation}
\label{eq:subgauss}
\bE(\varepsilon_{ij})^{2m}\le (c_0m)^m, \qquad \forall m\in \bN
\end{equation}
for some constant $c_0>0$. Then the spectral norm of $X-D$ satisfies
$$
\|X-D\|=2\sigma\left(\sqrt{n}+O_p(n^{-1/6})\right).
$$
See, e.g., Sinai and Soshnikov (1998). Thus, 
\begin{corollary}
\label{co:add}
Let $\widehat{D}$ be defined by (\ref{eq:defD}). Under the model given by (\ref{eq:add}) and (\ref{eq:subgauss}), if $\lambda_n=4\sigma(n^{1/2}+1)$, then with probability tending to one,
$$
\|\widehat{D}-D\|_{\rm F}^2\le \inf_{M\in \Dcal_n}\left\{\|M-D\|_{\rm F}^2+{36n\sigma^2}(\dim(M)+1)\right\},
$$
as $n\to\infty$. In particular, if $\dim(D)=r$, then with probability tending to one,
$$
\|\widehat{D}-D\|_{\rm F}^2\le 36n\sigma^2(r+1).
$$
\end{corollary}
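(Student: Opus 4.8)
The plan is to obtain Corollary~\ref{co:add} as a probabilistic specialization of the deterministic oracle inequality in Theorem~\ref{th:oracle}, combined with the cited Sinai--Soshnikov edge estimate for $\|X-D\|$. The only genuine work is to verify that the hypothesis $\lambda_n\ge 2\|X-D\|$ of Theorem~\ref{th:oracle} holds with probability tending to one for the prescribed choice $\lambda_n=4\sigma(n^{1/2}+1)$; everything else is substitution.

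First I would isolate the event
$$
A_n=\left\{2\|X-D\|\le \lambda_n\right\}.
$$
On $A_n$ the hypothesis of Theorem~\ref{th:oracle} is met, so the deterministic bound
$$
\|\widehat{D}-D\|_{\rm F}^2\le \inf_{M\in\Dcal_n}\left\{\|M-D\|_{\rm F}^2+{9\over4}\lambda_n^2(\dim(M)+1)\right\}
$$
holds surely on $A_n$. It therefore suffices to prove $\Pr(A_n)\to1$ and then to substitute the value of $\lambda_n$.

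Next I would establish $\Pr(A_n)\to1$. The matrix $X-D$ is symmetric with vanishing diagonal, since $x_{ii}=d_{ii}=0$, and independent above-diagonal entries $\varepsilon_{ij}$ of mean zero and variance $\sigma^2$ satisfying the moment condition~(\ref{eq:subgauss}); this is precisely the Wigner-type ensemble treated by Sinai and Soshnikov (1998), whose edge result gives $\|X-D\|=2\sigma(n^{1/2}+O_p(n^{-1/6}))$. Consequently $2\|X-D\|-4\sigma n^{1/2}=4\sigma\,O_p(n^{-1/6})\to0$ in probability, whereas $\lambda_n-4\sigma n^{1/2}=4\sigma$ is a fixed positive gap. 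Hence $\Pr(2\|X-D\|\le\lambda_n)\to1$, that is, $\Pr(A_n)\to1$.

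Finally, on $A_n$ I would substitute $\lambda_n=4\sigma(n^{1/2}+1)$ into the oracle inequality. Since ${9\over4}\lambda_n^2=36\sigma^2(n^{1/2}+1)^2=36n\sigma^2(1+o(1))$, the displayed bound becomes, to leading order as $n\to\infty$,
$$
\|\widehat{D}-D\|_{\rm F}^2\le \inf_{M\in\Dcal_n}\left\{\|M-D\|_{\rm F}^2+36n\sigma^2(\dim(M)+1)\right\},
$$
and specializing to $M=D$, so that $\dim(M)=r$ and $\|M-D\|_{\rm F}^2=0$, yields $\|\widehat{D}-D\|_{\rm F}^2\le36n\sigma^2(r+1)$. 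I expect no serious obstacle, as Theorem~\ref{th:oracle} and the Sinai--Soshnikov estimate do the heavy lifting; the only points demanding care are the passage from the $O_p$ statement to the limiting control of $A_n$ and the harmless book-keeping that replaces the prefactor $(n^{1/2}+1)^2$ by its leading term $n$, consistent with the ``with probability tending to one, as $n\to\infty$'' framing of the statement.
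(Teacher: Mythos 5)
Your proposal is correct and follows essentially the same route as the paper: the paper gives no separate proof of Corollary~\ref{co:add}, obtaining it immediately by citing the Sinai--Soshnikov estimate $\|X-D\|=2\sigma\left(\sqrt{n}+O_p(n^{-1/6})\right)$ to verify that the hypothesis $\lambda_n\ge 2\|X-D\|$ of Theorem~\ref{th:oracle} holds with probability tending to one, and then substituting $\lambda_n=4\sigma(n^{1/2}+1)$ into the oracle inequality. Your explicit handling of the event $A_n$ and of the bookkeeping that replaces ${9\over 4}\lambda_n^2=36\sigma^2(n^{1/2}+1)^2$ by its leading term $36n\sigma^2$ is, if anything, more careful than the paper, which performs that same asymptotic substitution silently.
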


\vskip 10pt

In other words, under the model given by (\ref{eq:add}) and (\ref{eq:subgauss}),
$$
L(\hat{D},D)\le L(\tilde{D}, D)+{Cr\sigma^2\over n},
$$
for some constant $C>0$, where as before, $\tilde{D}$ is the closest Euclidean distance matrix to $D$ with embedding dimension $r$. In particular, if $D$ itself is embedding dimension $r$, then
$$
L(\hat{D},D)\le {Cr\sigma^2\over n}.
$$

\subsection{Low Dimensional Approximation}

As mentioned before, in some applications, the chief goal may not be to recover $D$ itself but rather its embedding in a prescribed dimension. This is true, in particular, for multidimensional scaling and graph realization where we are often interested in embedding a distance matrix in $\bR^2$ or $\bR^3$. Following the classical multidimensional scaling, a parameter of interest in these cases is
$$
D_r:=\argmin_{M\in \Dcal_n(r)} \|J(D-M)J\|_{\rm F}^2,
$$
where $\Dcal_n(r)$ is the set of all $n\times n$ Euclidean distance matrices of embedding dimension at most $r$. An obvious estimate of $D_r$ can be derived by replacing $D$ with $\hat{D}$:
\begin{equation}
\label{eq:defDr}
\hat{D}_r:=\argmin_{M\in \Dcal_n(r)} \|J(\hat{D}-M)J\|_{\rm F}^2.
\end{equation}
Similar to the classical multidimensional scaling, the estimate $\hat{D}_r$ can be computed more explicitly as follows. Let $\hat{K}$ be the regularized kernel estimate corresponding to $\hat{D}$, and $\hat{K}=U\Gamma U^\top$ be its eigenvalue decomposition with $\Gamma=\diag(\gamma_1,\gamma_2,\ldots)$ and $\gamma_1\ge \gamma_2\ge\ldots$. Then $\hat{D}_r=\Tscr(\hat{K}_r)$ where $\hat{K}_r=U\diag(\gamma_1,\ldots,\gamma_r,0,\ldots) U^\top$.
 
The risk bounds we derived for $\hat{D}$ can also be translated into that for $\hat{D}_r$. More specifically,

\begin{corollary}
\label{co:embed}
Let $\hat{D}_r$ be defined by (\ref{eq:defDr}) where $\hat{D}$ is given by (\ref{eq:defD}) with $\lambda_n\ge 2\|X-D\|$. Then there exists a numerical constant $C>0$ such that
$$
\|J(\hat{D}_r-D)J\|_{\rm F}^2\le C\left(\min_{M\in \Dcal_n(r)} \|J(D-M)J\|_{\rm F}^2+{\lambda_n^2r}\right),
$$
In particular, under the model given by (\ref{eq:add}) and (\ref{eq:subgauss}), if $\lambda_n=4\sigma(n^{1/2}+1)$, then with probability tending to one,
$$
\|J(\hat{D}_r-D)J\|_{\rm F}^2\le C\left(\min_{M\in \Dcal_n(r)} \|J(D-M)J\|_{\rm F}^2+{nr\sigma^2}\right).
$$
\end{corollary}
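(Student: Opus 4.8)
The plan is to transport the whole statement to the kernel side, where $\hat D_r$ already has the explicit truncation form recorded above, and then to run a best--rank--$r$ oracle argument whose only nonstandard ingredient is an operator--norm control of the kernel estimation error. Write $K^\ast=\Rscr(D)=-\tfrac12 JDJ$ and $\hat K=\Rscr(\hat D)=-\tfrac12 J\hat DJ$, both centered and positive semidefinite, let $\mu_1\ge\mu_2\ge\cdots\ge0$ be the eigenvalues of $K^\ast$, and let $K^\ast_r$ and $\hat K_r$ be the rank-$r$ truncations of $K^\ast$ and $\hat K$. Since $\Rscr(A)-\Rscr(B)=-\tfrac12 J(A-B)J$, we have $\|J(A-B)J\|_{\rm F}=2\|\Rscr(A)-\Rscr(B)\|_{\rm F}$, so $\|J(\hat D_r-D)J\|_{\rm F}^2=4\|\hat K_r-K^\ast\|_{\rm F}^2$ (using $\hat D_r=\Tscr(\hat K_r)$). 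The same identity, together with the fact that as $M$ ranges over $\Dcal_n(r)$ its kernel $\Rscr(M)$ ranges over all centered positive semidefinite matrices of rank at most $r$, gives $\min_{M\in\Dcal_n(r)}\|J(D-M)J\|_{\rm F}^2=4\|K^\ast-K^\ast_r\|_{\rm F}^2$, the minimum being attained at $D_r=\Tscr(K^\ast_r)$. It therefore suffices to prove $\|\hat K_r-K^\ast\|_{\rm F}^2\le C'\bigl(\|K^\ast-K^\ast_r\|_{\rm F}^2+\lambda_n^2 r\bigr)$.

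The second step is a deterministic best--rank--$r$ inequality. Because $\hat K\succeq0$, its truncation $\hat K_r$ is the Frobenius-nearest matrix of rank at most $r$ (Eckart--Young), so comparing it with the competitor $K^\ast_r$ gives $\|\hat K_r-\hat K\|_{\rm F}\le\|K^\ast_r-\hat K\|_{\rm F}$; expanding the squares and simplifying yields $\|\hat K_r-K^\ast\|_{\rm F}^2\le\|K^\ast_r-K^\ast\|_{\rm F}^2+2\langle \hat K_r-K^\ast_r,\ \hat K-K^\ast\rangle$. Since $\hat K_r-K^\ast_r$ has rank at most $2r$, the cross term is bounded by trace duality as $\langle \hat K_r-K^\ast_r,\hat K-K^\ast\rangle\le\|\hat K_r-K^\ast_r\|_\ast\,\|\hat K-K^\ast\|\le\sqrt{2r}\,\|\hat K_r-K^\ast_r\|_{\rm F}\,\|\hat K-K^\ast\|$, where $\|\cdot\|_\ast$ is the nuclear norm and $\|\cdot\|$ the spectral norm. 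Inserting $\|\hat K_r-K^\ast_r\|_{\rm F}\le\|\hat K_r-K^\ast\|_{\rm F}+\|K^\ast-K^\ast_r\|_{\rm F}$ and solving the resulting quadratic in $\|\hat K_r-K^\ast\|_{\rm F}$ gives $\|\hat K_r-K^\ast\|_{\rm F}^2\le 3\|K^\ast-K^\ast_r\|_{\rm F}^2+16\,r\,\|\hat K-K^\ast\|^2$, which is exactly of the desired form once a spectral bound on $\hat K-K^\ast$ is available.

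The hard part, and the step I expect to be the main obstacle, is the spectral estimate $\|\hat K-K^\ast\|\le\lambda_n/2$. Writing $Y=X-\tfrac{\lambda_n}{2n}D_0$, Theorem \ref{th:equiv} gives $\hat D=\Pscr_{\Dcal_n}(Y)$, so with $W=Y-D$ one has $\hat K-K^\ast=-\tfrac12 JWJ+\tfrac12 JRJ$, where $R=Y-\hat D$ is the projection residual. The term $JWJ$ is harmless: using $JD_0J=-J$ and the hypothesis $\lambda_n\ge2\|X-D\|$ one checks $\|JWJ\|\le\|W\|\le\|X-D\|+\tfrac{\lambda_n}{2n}\|D_0\|\le\lambda_n$. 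The difficulty is to control the residual $R$ in spectral norm, since metric projection onto the convex cone $\Dcal_n$ is non-expansive only in the Frobenius norm. Here I would invoke the optimality (KKT) conditions of the kernel program (\ref{eq:rke}): there is $Z\succeq0$ with $Z\hat K=0$ and $Z=\lambda_n I-2\sum_{i<j}(x_{ij}-\langle\hat K,B_{ij}\rangle)B_{ij}$; using $\sum_{i<j}B_{ij}=nJ$ and $\hat K\one=\zero$ this reduces to $A\preceq\tfrac{\lambda_n}{2}I$ and $A\hat K=\tfrac{\lambda_n}{2}\hat K$ for $A=\sum_{i<j}(x_{ij}-\langle\hat K,B_{ij}\rangle)B_{ij}$. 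These one-sided operator relations, in which the noise enters only through $\sum_{i<j}\veps_{ij}B_{ij}$ (controlled in the centered subspace by $\|X-D\|\le\lambda_n/2$), must then be converted into the two-sided spectral bound on $\hat K-K^\ast$; the genuine technical work is in handling the diagonal/row-sum corrections that the maps $\Tscr,\Rscr$ introduce, and this is essentially the operator-norm counterpart of the Frobenius analysis already underlying Theorem \ref{th:oracle}.

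Finally I would assemble the pieces. Combining the two displayed bounds with $\|\hat K-K^\ast\|\le\lambda_n/2$ gives $\|J(\hat D_r-D)J\|_{\rm F}^2=4\|\hat K_r-K^\ast\|_{\rm F}^2\le 12\|K^\ast-K^\ast_r\|_{\rm F}^2+16\,r\,\lambda_n^2$, i.e. the first assertion with $C=16$ after substituting $4\|K^\ast-K^\ast_r\|_{\rm F}^2=\min_{M\in\Dcal_n(r)}\|J(D-M)J\|_{\rm F}^2$. For the probabilistic statement I would argue exactly as in Corollary \ref{co:add}: under (\ref{eq:add})--(\ref{eq:subgauss}) with $\lambda_n=4\sigma(n^{1/2}+1)$, the event $\lambda_n\ge2\|X-D\|$ holds with probability tending to one by $\|X-D\|=2\sigma(\sqrt n+O_p(n^{-1/6}))$, and on that event $\lambda_n^2 r\le16\sigma^2(n^{1/2}+1)^2 r\le C\,n r\sigma^2$, which yields the stated bound.
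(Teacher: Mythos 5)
Your reduction to the kernel side (Step 1), the Eckart--Young/trace-duality inequality (Step 2), and the final assembly (Step 4) are all algebraically correct, but the proof is incomplete at precisely the point where all the difficulty sits: the spectral-norm bound $\|\hat K-\Rscr(D)\|\le\lambda_n/2$, which you state as the ``hard part'' and then explicitly leave undone. This is a genuine gap, not a routine verification. As you yourself note, projection onto $\Dcal_n$ is non-expansive only in Frobenius norm, so the residual $R$ admits no direct spectral control. The KKT relations you invoke give only a one-sided inequality $A\preceq\tfrac{\lambda_n}{2}I$ in which the error $\Delta=\Rscr(D)-\hat K$ enters through the design operator $\sum_{i<j}\langle\Delta,B_{ij}\rangle B_{ij}$; for a unit vector $v\perp\one$ its quadratic form equals $2v^\top\Delta v+n\sum_k\Delta_{kk}v_k^2+\tr(\Delta)$, and the diagonal terms carry a factor of $n$ and no sign, so no two-sided bound on $\|\Delta\|$ follows without substantial new work --- nothing in the paper supplies such a bound. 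The only fallback available, $\|\Delta\|\le\|\Delta\|_{\rm F}\le\tfrac12\|\hat D-D\|_{\rm F}$ combined with Theorem \ref{th:oracle}, yields at best $\|\Delta\|\lesssim\lambda_n\sqrt{r+1}$, and only when $\dim(D)\le r$ exactly; fed into your Step 2 this inflates the remainder term to order $\lambda_n^2r^2$ rather than the claimed $\lambda_n^2 r$, and it gives nothing useful when $D$ is only approximately low-dimensional. So as written, the argument does not prove the corollary.

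The paper's own proof shows that no spectral control of $\hat K-\Rscr(D)$ is needed, and this is where your route diverges from a workable one. The paper stays on the distance side: since $\hat D_r$ minimizes $\|J(M-\hat D)J\|_{\rm F}^2$ over $\Dcal_n(r)$, comparison with the competitor $D_r$ gives $\|J(\hat D_r-\hat D)J\|_{\rm F}\le\|J(D_r-\hat D)J\|_{\rm F}$; two applications of the triangle inequality together with $\|JMJ\|_{\rm F}\le\|M\|_{\rm F}$ then bound $\|J(\hat D_r-D)J\|_{\rm F}^2$ by $4\min_{M\in\Dcal_n(r)}\|J(D-M)J\|_{\rm F}^2+6\|\hat D-D\|_{\rm F}^2$, and the last term is controlled by invoking Theorem \ref{th:oracle} (the Frobenius oracle inequality, already established via the Kolmogorov criterion) with $M=D_r$, giving the $\lambda_n^2(r+1)$ term. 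The spectral norm enters only through the hypothesis $\lambda_n\ge2\|X-D\|$ that Theorem \ref{th:oracle} requires, which is exactly the condition assumed in the corollary. Your proposal can be repaired by replacing your Steps 2--3 (Eckart--Young, trace duality, and the unproven spectral estimate) with this comparison argument; your Step 1 translation and the probabilistic reasoning in Step 4 can be kept as they are.
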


\vskip 10pt

\section{Computation}
\label{sec:alg}

It is not hard to see that the optimization problem involved in defining the regularized kernel estimate can be formulated as a second order cone program (see, e.g., Lu et al. 2005; Yuan et al., 2007). This class of optimization problems can be readily solved using generic solvers such as \verb+SDPT3+ (Toh, Todd and Tutuncu, 1999; Tutuncu, Toh and Todd, 2003). Although in principle, these problems can be solved in polynomial time, on the practical side, the solvers are known not to scale well to large problems.  Instead of starting from the regularized kernel estimate, as shown in Section \ref{sec:th}, $\hat{D}$ can be directly computed as a projection onto the set of Euclidean distance matrices. Taking advantage of this direct characterization and the particular geometric structure of the closed convex cone $\Dcal_n$, we can devise a more efficient algorithm to compute $\hat{D}$.


We shall adopt, in particular, an alternating projection algorithm introduced by Dykstra (1983). Dykstra's algorithm is a refinement of the von Neumann alternating projection algorithm specifically designed to compute projection onto the intersection of two closed convex sets by constructing a sequence of projections to the two sets alternatively.
 
\begin{algorithm}[htbp]
\caption{Dykstra's alternating projection algorithm: $\Pscr_{\Ccal_1}$ and $\Pscr_{\Ccal_2}$ are the projections onto $\Ccal_1$ and $\Ccal_2$ respectively.}
\label{alg:dykstra}
\KwData{$x$.}
\KwResult{Projection of $x$ onto the intersection of two closed convex set $\Ccal_1$ and $\Ccal_2$.}
Initialization: $x_0 = x$, $p_0=0$, $q_0=0$, $k=0$ \;
\Repeat{\emph{a certain convergence criterion is met}} {  
  $s_k \leftarrow \Pscr_{\Ccal_1}(x_{k} + p_{k})$\; 
  $p_{k+1} \leftarrow x_{k} + p_{k}-s_k$ \;
  $x_{k+1} \leftarrow \Pscr_{\Ccal_2}(s_k+q_{k})$ \;  
  $q_{k+1} \leftarrow s_k+q_{k}-x_{k+1}$ \;
  $k\leftarrow k+1$ \;
}
\KwRet{$x_{k+1}$} \;
\end{algorithm}

The idea can also be illustrated by Figure \ref{fig:alt} where the projection of a point onto the intersection of two half-planes is computed.

\begin{figure}[htbp]
\begin{center}
\begin{tikzpicture}
\draw[thick,-] (-1,0) -- (6,0);
\draw[thick,-] (-1,-0.5) -- (6,3);
\draw[dashed,thick,->] (5,0) -- (5,1) node[pos=0.5,right]{$p_1$};
\draw[dashed,thick,->] (4,2) -- (5,0) node[pos=0.5] {$q_1$};
\draw[dashed,thick,->] (4,0) -- (4,3) node[pos=0.5]{$p_2$}; 
\draw[dashed,thick,->] (3.2,1.6) -- (5,-2) node[pos=0.7,left]{$q_2$}; 
\fill (0,0) circle (2pt) node[above] {$\Pscr_{\Ccal_1\cap\Ccal_2}(x_0)$};
\fill (5,1) circle (2pt) node[right] {$x_0$};
\fill (5,0) circle (2pt) node[below] {$s_0$};
\fill (4,2) circle (2pt) node[right] {$x_1$};
\fill (4,3) circle (2pt) node[right] {$x_1+p_1$};
\fill (4,0) circle (2pt) node[below] {$s_1$};
\fill (3.2,1.6) circle (2pt) node[above] {$x_2$};
\fill (5,-2) circle (2pt) node[right] {$s_1+q_1$};
\fill (2.5,0.6) circle (0pt) node[below] {\ldots};
\fill (6,0) circle (0pt) node[right] {$\Ccal_1$};
\fill (6,3) circle (0pt) node[right] {$\Ccal_2$};
\end{tikzpicture}
\end{center}
\caption{Illustration of alternating projection algorithm.}
\label{fig:alt}
\end{figure}
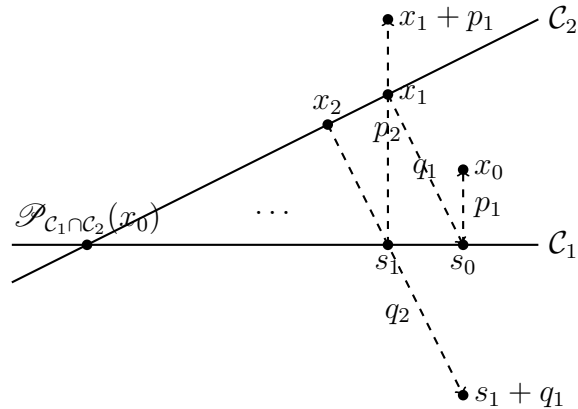

Now consider evaluating $\hat{D}$ which is the projection of $X-\eta_nD_0$ onto $\Dcal_n$. Observe that $\Dcal_n$ is the intersection of two closed convex cones:
$$
\Ccal_1=\{M\in \bR^{n\times n}: JMJ\preceq 0\},
$$
and
$$
\Ccal_2=\{M\in \bR^{n\times n}: \diag(M)=\zero\}.
$$
Dykstra's alternating projection algorithm can then be readily applied with input $X-\eta_nD_0$. The use of alternating projection algorithms is motivated by the fact that although $\Pscr_{\Ccal_1\cap \Ccal_2}$ is difficult to evaluate, projections to $\Ccal_1$ and $\Ccal_2$ actually have explicit form and are easy to compute.

More specifically, for any symmetric matrix $A\in \bR^{n\times n}$, let $\bar{A}_{11}$ be the $(n-1)$th leading principle submatrix of its Householder transform $QAQ$ where
$Q=I-vv^\top/n$ and $v=[1,\ldots,1,1+\sqrt{n}]^\top$. In other words,
$$
A=Q\left[\begin{array}{cc}\bar{A}_{11}& \bar{A}_{12}\\ \bar{A}_{21}& \bar{A}_{22}\end{array}\right]Q
$$
Let $\bar{A}_{11}=U\Gamma U^\top$ be its eigenvalue decomposition. Then
$$
\Pscr_{\Ccal_1}(A)=Q\left[\begin{array}{cc}U\Gamma^+ U^\top& \bar{A}_{12}\\ \bar{A}_{21}& \bar{A}_{22}\end{array}\right]Q
$$
where $\Gamma^+=\diag(\max\{\gamma_{ii},0\})$. See Hayden and Wells (1988). On the other hand, it is clear that $\Pscr_{\Ccal_2}(A)$ simply replaces all diagonal entries of $A$ with zeros.

%
%
%

\section{Numerical Examples}
\label{sec:num}

To illustrate the practical merits of the proposed methods and the efficacy of the algorithm, we conducted several numerical experiments.

\subsection{Sequence Variation of Vpu Protein Sequences}
\label{sec:vpu}

The current work was motivated in part by a recent study on the variation of Vpu (HIV-1 virus protein U) protein sequences and their relationship to preservation of tetherin and CD4 counter-activities (Pickering et al., 2014). Viruses are known for their fast mutation and therefore an important task is to understand the diversity within a viral population. Of particular interest in this study is a Vpu sequence repertoire derived from actively replicating plasma virus from 14 HIV-1-infected individuals. Following standard MACS criteria, five of these individuals can be classified as Long-term nonprogressors, five as rapid progressors, and four as normal progressors, according to how long the progression from seroconversion to AIDS takes. A total of 304 unique amino acid sequences were obtained from this study.

We first performed pairwise alignment between these amino acid sequences using various BLOSUM substitution matrices. The results using different substitution matrices are fairly similar; and to fix ideas, we shall report here analysis based on the BLOSUM62 matrix. These pairwise similarity scores $\{s_{ij}: 1\le i\le j\le n\}$  are converted into dissimilarity scores: 
$$
x_{ij}=s_{ii}+s_{jj}-2s_{ij},\qquad \forall 1\le i<j\le n.
$$
As mentioned earlier, $X=(x_{ij})_{1\le i,j\le n}$ is not an Euclidean distance matrix. To this end, we first applied the classical multidimensional scaling to $X$. The three dimensional embedding is given in the top left panel of Figure \ref{fig:mds1}. The amino acid sequences derived from the same individuals are represented by the same symbol and color. Different colors correspond to the three different classes of disease progression: long-term nonprogressors are represented in red, normal in green, and rapid progressors in purple. For comparison, we also computed $\hat{D}$ with various choices of the tuning parameters. Similar to the observations made by Lu et al. (2005), the corresponding embeddings are qualitatively similar for a wide range of choices of $\lambda_n$. A typical one is given in the top right panel of Figure \ref{fig:mds1}. It is clear that both embeddings share a lot of similarities. For example, sequences derived from the same individual are more similar as they tend to cluster together. The key difference, however, is that the embedding corresponding to $\hat{D}$ suggests an outlying sequence. We went back to the original pairwise dissimilarity scores and identified the sequence as derived from a rapid progressor. It is fairly clear from the original scores that this sequence is different from the others. The minimum dissimilarity score from the particular sequence to any other sequence is 245 whereas the largest score between any other pair of sequences is 215. The histogram of the scores between the sequence and other sequences, or among other sequences are given in the bottom panel of Figure \ref{fig:mds1}.

\begin{figure}[htbp]
\begin{center}
\includegraphics[scale=0.66]{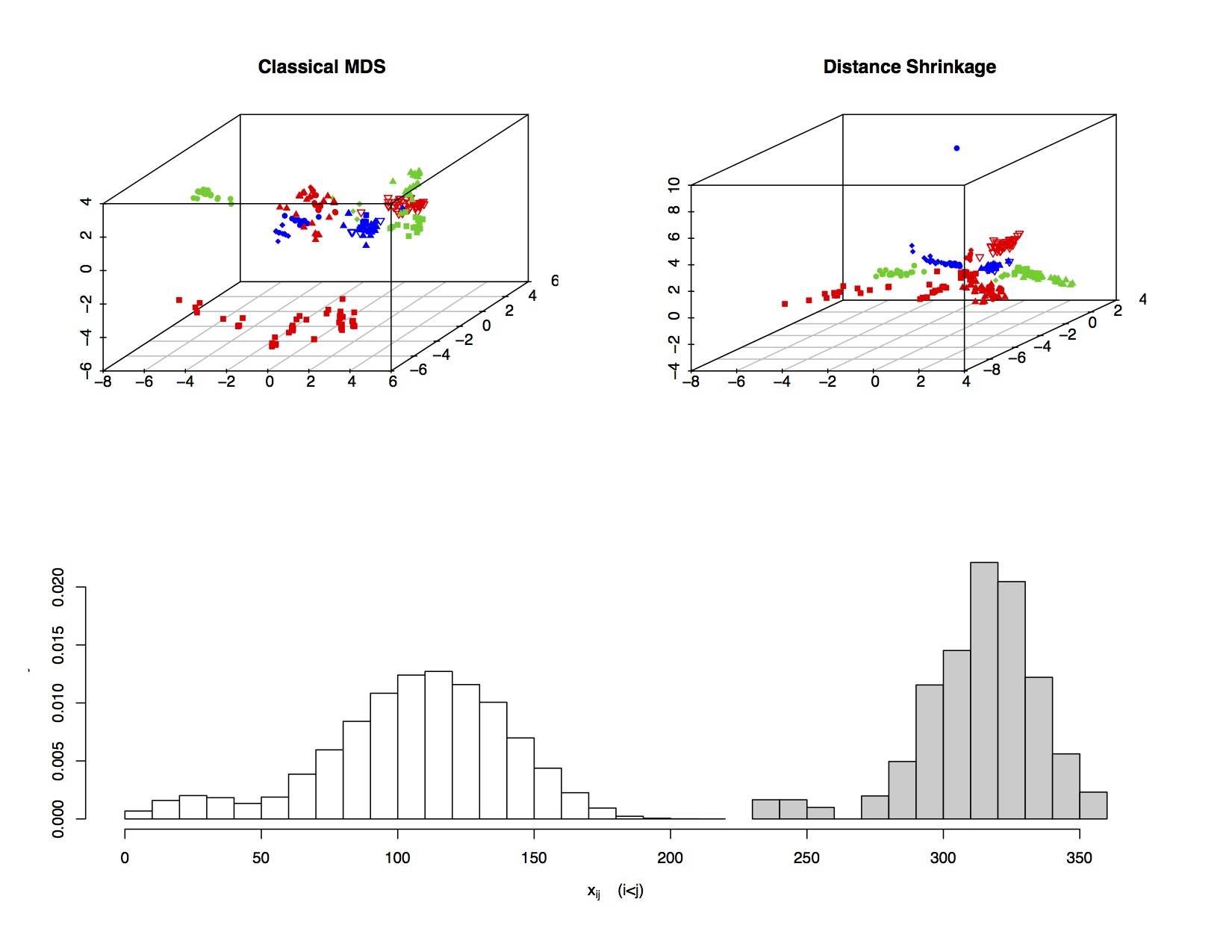}
\caption{Three dimensional embedding for 304 amino acid sequences: the top panels are embeddings from classical multidimensional scaling and distance shrinkage respectively. The histogram of the pairwise dissimilarity scores is given in the bottom panel. The shaded histogram corresponds to those scores between the outlying sequence and the other sequences.}
\label{fig:mds1}
\end{center}
\end{figure}

Given these observations, we now consider the analysis with the outlying sequence removed. To gain insight, we consider different choices of $\lambda_n$ to visually inspect the Euclidean embeddings given by the proposed distance shrinkage. The embeddings given in Figure \ref{fig:mds2} correspond to $\lambda_n$ equals $4000$, $8000$, $12000$ and $16000$ respectively. These embedding are qualitatively similar. 
 
\begin{figure}[htbp]
\begin{center}
\includegraphics[scale=0.66]{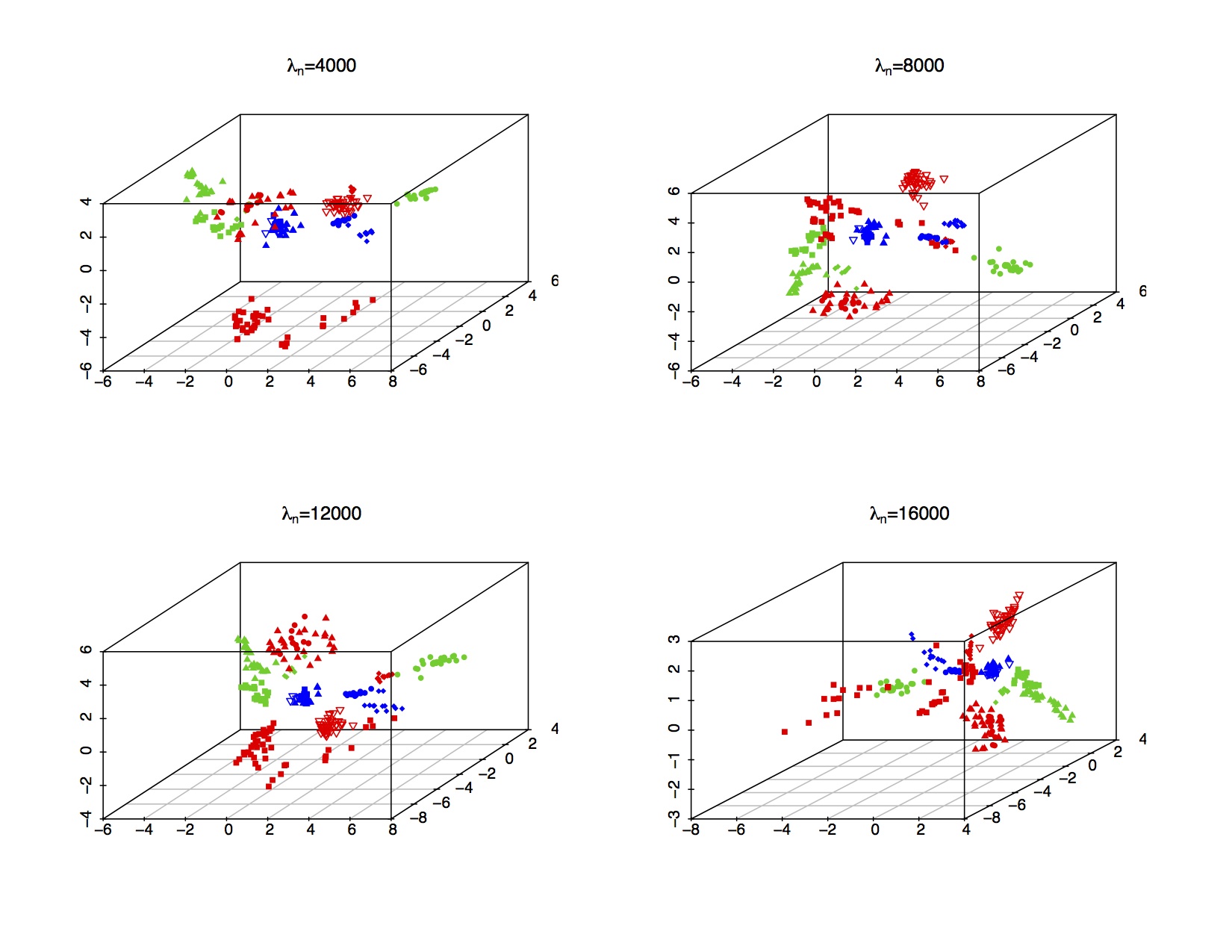}
\caption{Euclidean embedding of 303 amino acid sequences via distance shrinkage: the outlying sequence was removed from the original data and each panel corresponds to different choice of $\lambda_n$.}
\label{fig:mds2}
\end{center}
\end{figure}

\subsection{Simulated Examples}

To further compare the proposed distance shrinkage approach with the classical multidimensional scaling, we carried out several sets of simulation studies. For illustration purposes, we took the setup of the molecular conformation problem discussed earlier. In particular, we considered the problem of protein folding, a process of a random coil conformed to a physically stable three-dimensional structure equipped with some unique characteristics and functions.

We started by extracting the existing data on the 3D structure of the channel-forming trans-membrane domain of Vpu protein from HIV-1 mentioned before. The data obtained from protein data bank (symbol: 1PJE) contains the 3D coordinates of a total of $n=91$ atoms. The exact Euclidean distance matrix $D$ was then calculated from these coordinates. We note that in this case the embedding dimension is known to be three. We generated observations $x_{ij}$ by adding an measurement error $\varepsilon_{ij}\sim N(0,\sigma^2)$ for $1\le i<j\le n$. We considered three different values of $\sigma^2$ -- $0.05$, $0.25$ and $0.5$ respectively, representing relatively high, medium and low signal to noise ratio. For each value of $\sigma^2$, we simulated one hundred datasets and computed for each dataset the Euclidean distance matrix corresponding to the classical multidimensional scaling and the distance shrinkage. We evaluated the performance of each method by the Kruskal's stress defined as $\|\hat{D}-D\|_{\rm F}/ \|D\|_{\rm F}$. The results are summarized by Table \ref{tab:pjesummary}.

\begin{table}[htbp]
\begin{center}
\begin{tabular}{c c c c}
\hline \hline
Signal-to-Noise Ratio & Method & Mean & Standard error \\
\hline
High & Distance Shrinkage & 0.010 & 2.0e-04 \\
\cline{2-4}
 & Classical MDS & 0.078& 9.3e-04 \\ 
\hline
Medium & Distance Shrinkage & 0.024 & 4.8e-04 \\
\cline{2-4}
 & Classical MDS & 0.185 & 2.5e-03 \\
\hline
Low & Distance Shrinkage & 0.035 & 8.4e-04 \\
\cline{2-4}
 & Classical MDS & 0.301 & 3.9e-03 \\
\hline
\end{tabular}
\caption{Kruskal's stress for 1PJE data with measurement error.}
\label{tab:pjesummary}
\end{center}
\end{table}

To better appreciate the difference between the two methods, Figure \ref{fig:1pje} gives the ribbon plot of the protein backbone structure corresponding to the true Euclidean distance matrix and the estimated ones from a typical simulation run with different signal to noise ratios. It is noteworthy that the improvement of the distance shrinkage over the classical multidimensional scaling becomes more evident with higher level of noise.
 
\begin{figure}[htbp]
\begin{center}
\subfigure[Distance Shrinkage, High signal-to-noise ratio]{
\includegraphics[scale=0.25]{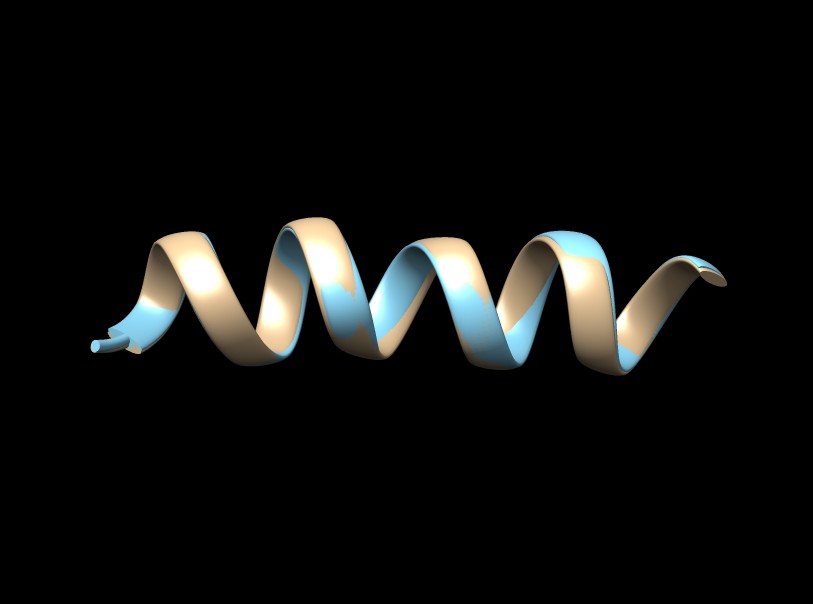}\qquad
\label{subfig:norm2RKE}}
\subfigure[Classical MDS, High signal-to-noise ratio]{
\includegraphics[scale=0.25]{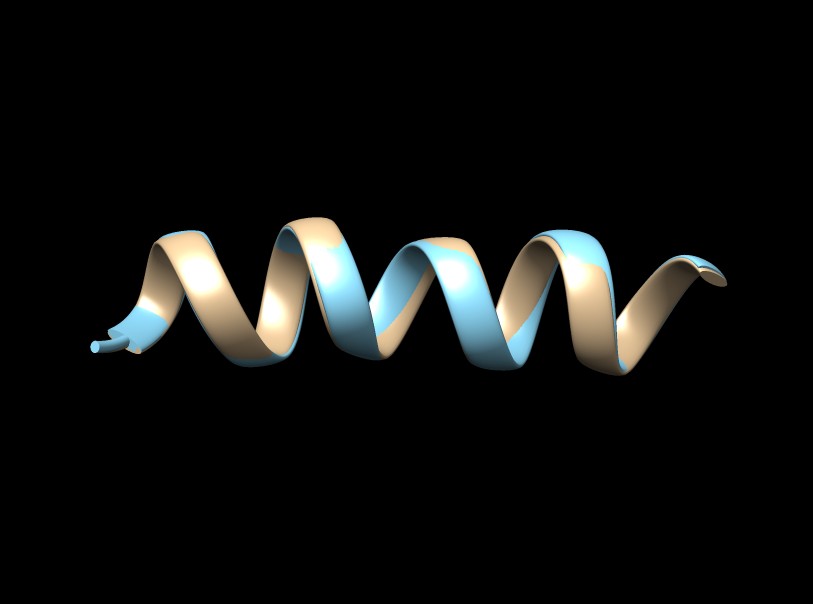}
\label{subfig:norm2MDS}}
\subfigure[Distance Shrinkage, Medium signal-to-noise ratio]{
\includegraphics[scale=0.25]{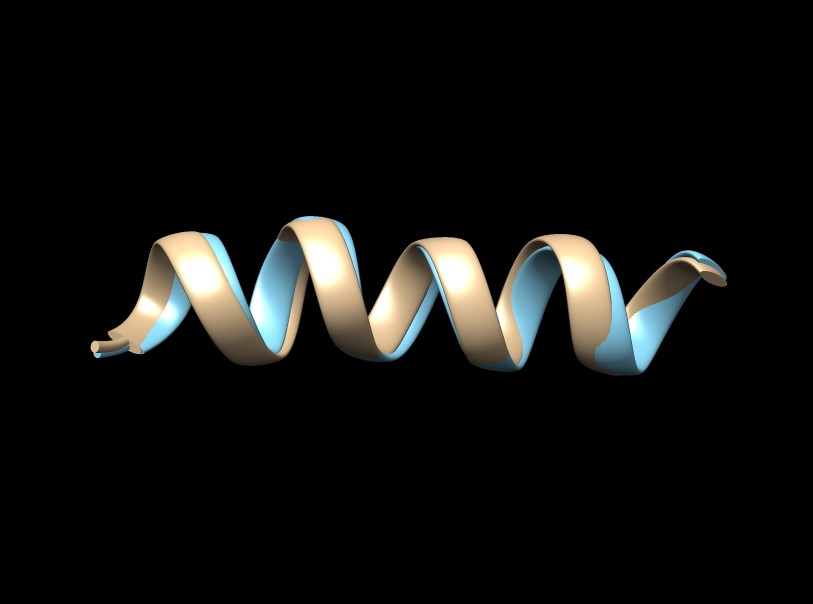}\qquad
\label{subfig:norm12RKE}}
\subfigure[Classical MDS, Medium signal-to-noise ratio]{
\includegraphics[scale=0.25]{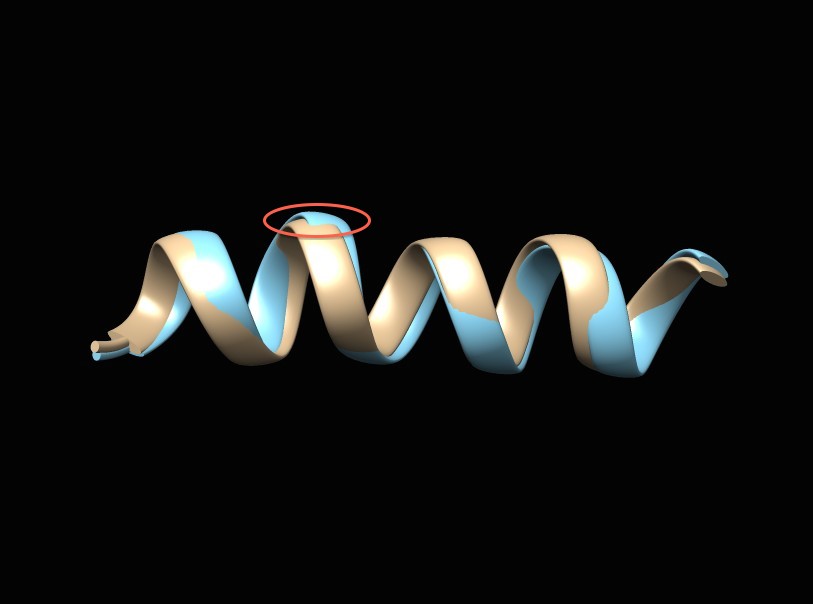}
\label{subfig:norm12MDS}}
\subfigure[Distance Shrinkage, Low signal-to-noise ratio]{
\includegraphics[scale=0.25]{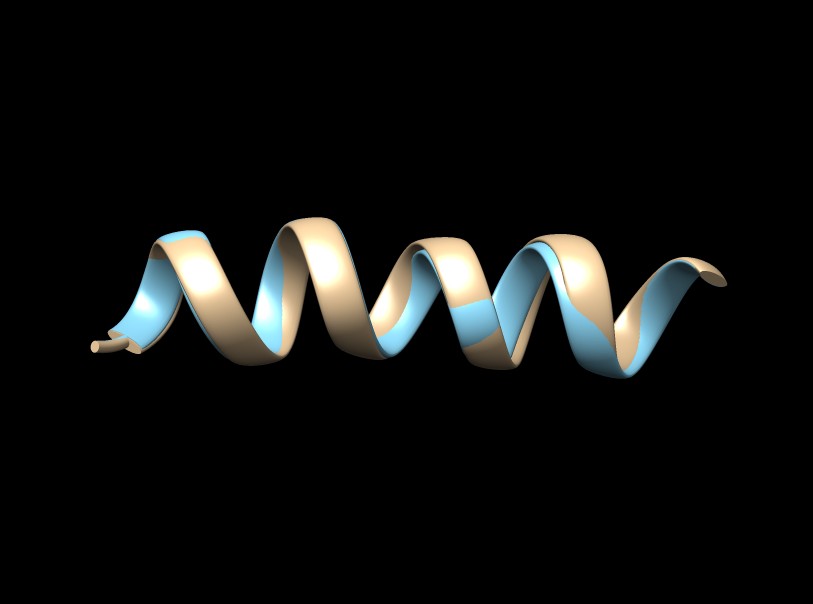}\qquad
\label{subfig:norm20RKE}}
\subfigure[Classical MDS, Low signal-to-noise ratio]{
\includegraphics[scale=0.25]{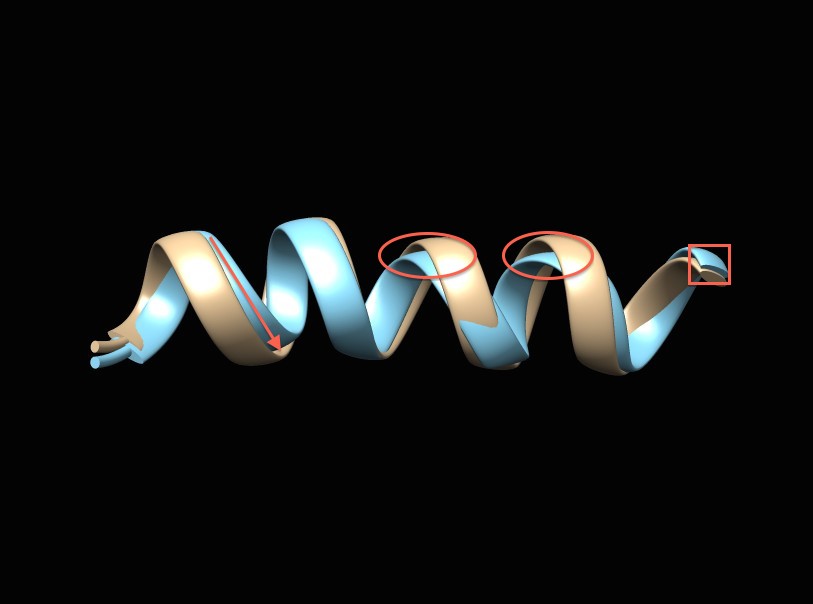}
\label{subfig:norm20MDS}}
\end{center}
\caption{Ribbon plot of 1PJE protein back structure: the true structure is represented in gold whereas the structured corresponding to the estimated Euclidean distance matrix is given in blue. The left panels are for the distance shrinkage estimate whereas the right panels are for the the classical multidimensional scaling. Particular regions where the distance shrinkage shows visible improvement is circled out in red in the right panels.}
\label{fig:1pje}
\end{figure}

Our theoretical analysis suggests better performances for larger number of atoms. To further illustrate this effect of $n$, we repeated the previous experiment for HIV-1 virus protein U cytoplasmic domain (protein data bank symbol: 2K7Y) consisting of $n=671$ atoms. We simulated data in the same fashion as before and the Kruskal stress, based on one hundred simulated dataset for each value of $\sigma^2$, is reported in Table \ref{tab:normdiff}. The performance compares favorable with that for 1PJE.

\begin{table}[htbp]
\begin{center}
\begin{tabular}{c c c c}
\hline \hline
Signal-to-Noise Ratio & Method & mean & standard error \\
\hline
High & Distance Shrinkage & 1.66e-04 & 2.70e-07 \\
\cline{2-4}
 & Classical MDS & 3.2e-03 & 4.84e-06 \\ 
\hline
Medium & Distance Shrinkage & 8.32e-04 & 1.48e-06 \\
\cline{2-4}
 & Classical MDS & 1.61e-02 & 2.45e-05 \\ 
\hline
Low & Distance Shrinkage & 1.7e-03 & 3.05e-06 \\
\cline{2-4}
 & Classical MDS & 3.22e-02 & 5.28e-05 \\ 
\hline
\end{tabular}
\caption{Kruskal's stress for 2K7Y data with measurement error.}
\label{tab:normdiff}
\end{center}
\end{table}

To further demonstrate the robustness of the approach to non-Gaussian measurement error, we generated pairwise distance scores between the 671 atoms following Gamma distributions:
$$
x_{ij}\sim {\rm Ga}(d_{ij},1),\qquad \forall 1\le i<j\le 671,
$$
so that both the mean and variance of $x_{ij}$ are $d_{ij}$, where $d_{ij}$ is the true squared distance between the $i$th and $j$th atoms. We again applied both classical multidimensional scaling and distance shrinkage to estimate the true distance matrix and reconstruct the 3D folding structure. The result from a typical simulated dataset is given in Figure \ref{fig:vpucomp}.

\begin{figure}[htbp]
\begin{center}
\includegraphics[scale=0.5]{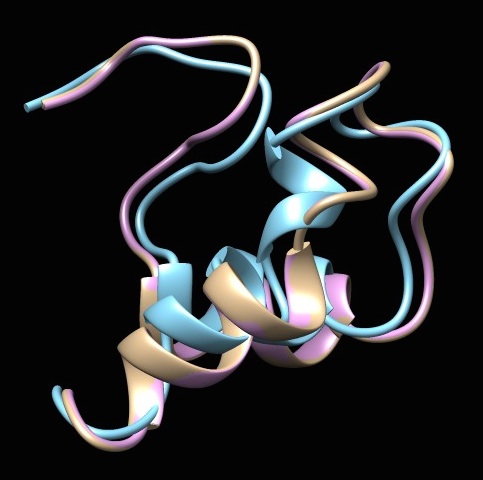}
\end{center}
\caption{Ribbon plot of 2K7Y protein back structure: the true structure, and the structures corresponding to the classical multidimensional scaling and the distance shrinkage estimate are represented in gold, blue and pink respectively.}
\label{fig:vpucomp}
\end{figure}

\section{Proofs}
\label{sec:proof}

\begin{proof}[Proof of Theorem \ref{th:mintr}] Denote by $M_0=-JDJ/2$. We first show that $M_0\in \Mcal(D)$. Note first that
$$
J(e_i-e_j)=(e_i-e_j).
$$
Therefore,
$$
\langle M_0, B_{ij}\rangle=-{1\over 2}(e_i-e_j)^\top JDJ(e_i-e_j)=-{1\over 2}(e_i-e_j)^\top D(e_i-e_j)=d_{ij},
$$
where in the last equality follows from the facts that $D$ is symmetric and $\diag(D)=\zero$. Together with the fact that $M_0\succeq 0$ (Sch\"onberg, 1935; Young and Householder, 1938), this implies that $M_0\in \Mcal(D)$.

Next, we show that for any $M\in \Mcal(D)$, $\tr(M_0)\le \tr(M)$. To this end, observe that
$$
D=\Tscr(M)=\diag(M)\one^\top +\one\diag(M)^\top-2M.
$$
Then
\bes
\tr(M_0)&=&\tr(-JDJ/2)\\
&=&{1\over 2}\tr\left[\left(I-{\one\one^\top\over n}\right)\left(2M-\diag(M)\one^\top -\one\diag(M)^\top\right)\left(I-{\one\one^\top\over n}\right)\right]\\
&=&{1\over 2}\tr\left(2M-\diag(M)\one^\top -\one\diag(M)^\top\right)\\
&&-{1\over n}\one^\top\left(2M-\diag(M)\one^\top -\one\diag(M)^\top\right)\one\\
&&+{1\over 2n^2}\tr\left[\one\one^\top\left(2M-\diag(M)\one^\top -\one\diag(M)^\top\right)\one\one^\top\right]\\
&=&-{1\over 2n}\one^\top\left(2M-\diag(M)\one^\top -\one\diag(M)^\top\right)\one\\
&=&\tr(M)-{1\over n}\one^\top M\one.
\ees
The positive semi-definiteness of $M$ ensures that $\one^\top M\one\ge 0$, which implies that $M_0$ has the minimum trace in $\Mcal(D)$. We now show it is also the only one.

Assume the contrary that there exists an $M\in \Mcal(D)$ such that $M\neq M_0$ yet $\tr(M)=\tr(M_0)$. Following the previous calculation, we have $\one^\top M\one=0$. Recall that $M\succeq 0$. The fact that $\one^\top M\one=0$ necessarily implies that $\one \in {\rm ker}(M)$. As a result, $M=JMJ$, and
$$
M-M_0=J(M-M_0)J.
$$
On the other hand,
$$
\langle M, B_{ij}\rangle=\langle M_0, B_{ij}\rangle=d_{ij}, \quad \forall i<j.
$$
Therefore,
$$
\langle J(M-M_0)J, B_{ij}\rangle=\langle M-M_0, B_{ij}\rangle=0, \quad \forall i<j.
$$
It is not hard to see that
$$
\{B_{ij}: i<j\}\cup \{e_ie_i^\top: 1\le i\le n\}
$$
forms a basis of the collection of $n\times n$ symmetric matrices. In other words, there exists $\alpha_{ij}$ ($1\le i\le j$) such that
$$
M-M_0=\sum_{1\le i< j\le n} \alpha_{ij}B_{ij}+\sum_{i=1}^{n-1} \alpha_{ii}e_ie_i.
$$
Recall that $\one\in {\rm ker}(M)\cap {\rm ker}(M_0)$. Hence
$$
(M-M_0)\one =[\alpha_{11},\ldots,\alpha_{nn}]^\top ={\bf 0}.
$$
In other words,
$$
M-M_0=\sum_{1\le i< j\le n} \alpha_{ij}B_{ij}.
$$
Thus
$$
\|M-M_0\|_{\rm F}^2=\|J(M-M_0)J\|_{\rm F}^2=\sum_{1\le i< j\le n} \alpha_{ij}\langle J(M-M_0)J, B_{ij}\rangle=0.
$$
This obviously contradicts with the assumption that $M\neq M_0$.

The second statement follows from the same argument. Note that $PP^\top\in \Mcal(D)$. Because the embedding points are centered, we have $\one^\top PP^\top \one=0$. The previous argument then suggests that $PP^\top=M_0$.
\end{proof}
\vskip 25pt

\begin{proof}[Proof of Theorem \ref{th:equiv}] Recall that $J=I-(\one\one^\top/n)$. Observe that $D_0=(n-1)I-nJ$. Therefore, for any $M\in \Dcal_n$,
\bes
\left\|\left(X-{\lambda_n\over 2n} D_0\right)-M\right\|_{\rm F}^2&=&\|X-M\|_{\rm F}^2+{\lambda_n\over n}\langle M, D_0\rangle +({\rm terms\ not\ involving\ }M)\\
&=&\|X-M\|_{\rm F}^2+{\lambda_n\over n}\langle M, (n-1)I-nJ\rangle +({\rm terms\ not\ involving\ }M)\\
&=&\|X-M\|_{\rm F}^2-\lambda_n\langle M, J\rangle +({\rm terms\ not\ involving\ }M),
\ees
where the last equality follows from the fact that any distance matrix is hollow, e.g., its diagonals are zeros, hence $\langle M, I\rangle=0$. Because $J$ is idempotent,
$$
\langle M, J\rangle=\langle M, J^2\rangle=\tr(JMJ).
$$
Therefore,
\begin{eqnarray*}
\Pscr_{\Dcal_n}\left(X-{\lambda_n\over 2n} D_0\right)&=&\argmin_{M\in \Dcal_n}\left\{{1\over 2}\|X-M\|_{\rm F}^2-{\lambda_n\over 2} \tr(JMJ)\right\}\\
&=&\argmin_{M\in \Dcal_n}\left\{{1\over 2}\|X-M\|_{\rm F}^2+\lambda_n\tr\left(-{1\over 2}JMJ\right)\right\},
\end{eqnarray*}
which, in the light of (\ref{eq:rkels}), implies the desired statement.
\end{proof}
\vskip 25pt

\begin{proof}[Proof of Theorem \ref{th:oracle}] By Theorem \ref{th:equiv}, $\hat{D}=\Pscr_{\Dcal_n}(X-(\lambda_n/2n)D_0)$. Write $\eta_n=\lambda_n/(2n)$ for simplicity. Recall that for any $M\in \bR^{n\times n}$, its projection to the closed convex set $\Dcal_n$, $\Pscr_{\Dcal_n}(M)$, can be characterized by the so-called Kolmogorov criterion:
$$
\langle A-\Pscr_{\Dcal_n}(M), M-\Pscr_{\Dcal_n}(M)\rangle\le 0,\qquad \forall A\in \Dcal_n.
$$
See, e.g., Escalante and Raydan (2011). In particular, taking $M=X-\eta_n D_0$ yields
$$
\langle A-\hat{D}, D-\hat{D}\rangle \le \langle X-D-\eta_n D_0,\hat{D}-A\rangle.
$$
A classical result in distance geometry by Sch\"onberg (1935) indicates that a distance matrix is conditionally negative semi-definite on the set
$$
\Xcal_n=\{x\in \bR^n: x^\top \one =0\},
$$
that is, $x^\top Mx\le 0$ for any $x\in \Xcal_n$. See also Young and Householder (1938). In other words, if $M\in \Dcal_n$, then the so-called Sch\"onberg transform $JMJ$ is negative semi-definite where, as before, $J=I-(\one\one^\top/n)$.

Let $V$ be the eigenvectors of $JAJ$, and $V_\perp$ be an orthonormal basis of the orthogonal complement of the linear subspace spanned by $\{\one\}$ and $V$. Then $[\one/\sqrt{n}, V,V_\perp]$ forms an orthonormal basis of $\bR^n$. Then for any symmetric matrix $M$, write
$$
M=\Pscr_0 M+\Pscr_1 M,
$$
where
$$
\Pscr_1 M= V_\perp V_\perp^\top MV_\perp V_\perp^\top
$$
and
\bes
\Pscr_0 M&=&M-\Pscr_1 M =[\one/\sqrt{n}, V] [\one/\sqrt{n}, V]^\top M[\one/\sqrt{n}, V] [\one/\sqrt{n}, V]^\top\\
&&\hskip 50pt+V_\perp V_\perp^\top M[\one/\sqrt{n}, V] [\one/\sqrt{n}, V]^\top+[\one/\sqrt{n}, V] [\one/\sqrt{n}, V]^\top MV_\perp V_\perp^\top.
\ees
Therefore,
\bes
\langle X-D, \hat{D}-A\rangle&=&\langle \Pscr_0(X-D), \Pscr_0(\hat{D}-A)\rangle+\langle \Pscr_1(X-D), \Pscr_1(\hat{D}-A)\rangle\\
&=&\langle \Pscr_0(X-D), \Pscr_0(\hat{D}-A)\rangle+\langle \Pscr_1(X-D), \Pscr_1\hat{D}\rangle\\
&\le&\|\Pscr_0(X-D)\|\|\Pscr_0(\hat{D}-A)\|_\ast+\|\Pscr_1(X-D)\|\|\Pscr_1\hat{D}\|_\ast
\ees
where in the last inequality we used the fact that for any matrices $M_1, M_2\in \bR^{n\times n}$,
$$
\langle M_1, M_2\rangle\le \|M_1\|\|M_2\|_\ast,
$$
and $\|\cdot\|$ and $\|\cdot\|_\ast$ represent the matrix spectral and nuclear norm respectively. It is clear that
$$
\|\Pscr_1(X-D)\|\le \|X-D\|,
$$
and
$$
\|\Pscr_0(X-D)\|\le 2\|X-D\|.
$$
Then,
$$
\langle X-D, \hat{D}-A\rangle\le\|X-D\|\left(2\|\Pscr_0(\hat{D}-A)\|_\ast+\|\Pscr_1\hat{D}\|_\ast\right)
$$

On the other hand, recall that both $D$ and $\hat{D}$ are hollow and $D_0=(n-1)I-nJ$. Thus,
\bes
\langle D_0, \hat{D}-A\rangle &=&n\langle A-\hat{D}, J\rangle\\
&=&n\tr(J(A-\hat{D})J)\\
&=&-n\tr(VV^\top(\hat{D}-A)VV^\top)-n\tr(\Pscr_1\hat{D})\\
&=&-n\tr(VV^\top(\hat{D}-A)VV^\top)+n\|\Pscr_1\hat{D}\|_\ast\\
&\ge&-n\|VV^\top(\hat{D}-A)VV^\top\|_\ast+n\|\Pscr_1\hat{D}\|_\ast\\
&\ge&-n\|\Pscr_0(\hat{D}-A)\|_\ast+n\|\Pscr_1\hat{D}\|_\ast,
\ees
where the last equality follows from the fact that $\Pscr_1\hat{D}$ is negative semi-definite.

Taking $n\eta_n \ge \|X-D\|$ yields that
$$
\langle X-D-\lambda_n D_0,\hat{D}-A\rangle\le 3n\eta_n\|\Pscr_0(\hat{D}-A)\|_{\ast}.
$$
Note that, by Cauchy-Schwartz inequality, for any $M\in \bR^{n\times n}$
$$
\|M\|_\ast\le \sqrt{\rank(M)}\|M\|_{\rm F}.
$$
Therefore,
\bes
\|\Pscr_0(\hat{D}-A)\|_{\ast}&\le& \sqrt{\rank(JAJ)+1}\|\Pscr_0(\hat{D}-A)\|_{\rm F}\\
&\le&\sqrt{\rank(JAJ)+1}\|\hat{D}-A\|_{\rm F}\\
&=&\sqrt{\dim(A)+1}\|\hat{D}-A\|_{\rm F},
\ees
where the last equality follows from the fact that for any Euclidean distance matrix $A$, $\dim(A)=\rank(JAJ)$. See, e.g., Sch\"onberg (1935) and Young and Householder (1938). As a result,
$$
\langle A-\hat{D}, D-\hat{D}\rangle \le 3n\eta_n\sqrt{\dim(A)+1}\|\hat{D}-A\|_{\rm F}.
$$

Simple algebraic manipulations show that
$$
\langle A-\hat{D}, D-\hat{D}\rangle={1\over 2}\left(\|\hat{D}-D\|_{\rm F}^2+\|\hat{D}-A\|^2_{\rm F}-\|A-D\|_{\rm F}^2\right).
$$
Thus,
$$
\|\hat{D}-D\|_{\rm F}^2+\|\hat{D}-A\|^2_{\rm F}\le \|A-D\|_{\rm F}^2+6n\eta_n\sqrt{\dim(A)+1}\|\hat{D}-A\|_{\rm F},
$$
which implies that
\bes
\|\hat{D}-D\|_{\rm F}^2&\le&\|A-D\|_{\rm F}^2+6n\eta_n\sqrt{\dim(A)+1}\|\hat{D}-A\|_{\rm F}-\|\hat{D}-A\|^2_{\rm F}\\
&=&\|A-D\|_{\rm F}^2+9n^2\eta_n^2(\dim(A)+1)-\left(\|\hat{D}-A\|_{\rm F}-3n\eta_n\sqrt{\dim(A)+1}\right)^2\\
&\le&\|A-D\|_{\rm F}^2+9n^2\eta_n^2(\dim(A)+1).
\ees
This completes the proof.
\end{proof}
\vskip 25pt

\begin{proof}[Proof of Corollary \ref{co:embed}]
Observe first that
$$
\hat{D}_r=\argmin_{M\in \Dcal_r}\|J(M-\hat{D})J\|_{\rm F}^2.
$$
Therefore,
\bes
\|J(\hat{D}_r-D)J\|_{\rm F}^2&\le& 2\|J(\hat{D}_r-\hat{D})J\|_{\rm F}^2+2\|J(\hat{D}-D)J\|_{\rm F}^2\\
&\le&2\|J(D_r-\hat{D})J\|_{\rm F}^2+2\|\hat{D}-D\|_{\rm F}^2\\
&\le&4\|J(D_r-D)J\|_{\rm F}^2+4\|J(\hat{D}-D)J\|_{\rm F}^2+2\|\hat{D}-D\|_{\rm F}^2\\
&\le&4\min_{M\in \Dcal_n(r)} \|J(D-M)J\|_{\rm F}^2+6\|\hat{D}-D\|_{\rm F}^2
\ees
On the other hand, taking $M=D_r$ in Theorem \ref{th:oracle} yields
\bes
{1\over n^2}\|\widehat{D}-D\|_{\rm F}^2&\le& {1\over n^2}\|D_r-D\|_{\rm F}^2+9\eta_n^2(r+1)\\
&=&{1\over n^2}\min_{M\in \Dcal_n(r)} \|J(D-M)J\|_{\rm F}^2+9\eta_n^2(r+1),
\ees
where, as before, $\eta_n=\lambda_n/2n$. Therefore,
$$
{1\over n^2}\|J(\hat{D}_r-D)J\|_{\rm F}^2\le {10\over n^2}\min_{M\in \Dcal_n(r)} \|J(D-M)J\|_{\rm F}^2+54\eta_n^2(r+1),
$$
which completes the proof.
\end{proof}
\vskip 25pt

\end{document}